\title{Selective Inference Approach for \\ Statistically Sound Discriminative Pattern Discovery}
\date{\today}
\author{
Shinya Suzumura \\
Nagoya Institute of Technology \\
\texttt{suzumura.mllab.nit@gmail.com} \\
\and
Kazuya Nakagawa \\
Nagoya Institute of Technology \\
\texttt{nakagawa.k.mllab.nit@gmail.com} \\
\and 
Mahito Sugiyama\\
Osaka University \\
\texttt{mahito@ar.sanken.osaka-u.ac.jp} \\
\and 
Koji Tsuda\\
University of Tokyo \\
\texttt{tsuda@k.u-tokyo.ac.jp} \\
\and 
Ichiro Takeuchi\thanks{Corresponding author} \\
Nagoya Institute of Technology \\
\texttt{takeuchi.ichiro@nitech.ac.jp} \\
}
\begin{document}

\maketitle

 \vspace{-.1in}
\begin{abstract}
 Discovering statistically significant patterns from databases is an important challenging problem. 
The main obstacle of this problem is in the difficulty of taking into account the selection bias, i.e., the bias arising from the fact that patterns are selected from extremely large number of candidates in databases. 
In this paper, we introduce a new approach for predictive pattern mining problems that can address the selection bias issue. 
Our approach is built on a recently popularized statistical inference framework called \emph{selective inference}. 
In selective inference, statistical inferences (such as statistical hypothesis testing) are conducted based on sampling distributions conditional on a selection event. 
If the selection event is characterized in a tractable way, statistical inferences can be made without minding selection bias issue. 
However, in pattern mining problems, it is difficult to characterize the entire selection process of mining algorithms. 
Our main contribution in this paper is to solve this challenging problem for a class of predictive pattern mining problems by introducing a novel algorithmic framework. 
We demonstrate that our approach is useful for finding statistically significant patterns from databases. 
%because it allows us to properly address the selection bias issue. 

 \begin{center}
 {\bf Keywords}
 \end{center}
 Statistically-sound data mining;
Predictive pattern mining;
Selective inference;
Statistical hypothesis testing

 \vspace{.1in}
\end{abstract}

\section{Introduction}
\label{sec:intro}
Discovering statistically reliable patterns 
from 
%a noisy
databases
is an important challenging problem. 
This problem is
sometimes
referred to as
\emph{statistically sound pattern discovery}
\cite{hamalainen2014statistically,webb2007discovering}.
In this paper
we introduce a new \emph{statistically sound} approach 
for predictive pattern mining
\cite{fan2008direct,novak2009supervised,zimmermann2014supervised}.
Although the main goal of predictive pattern mining is
to discover patterns
whose occurrences are highly associated
with the response,
it is often desirable
to additionally provide
the statistical significance of the association
for each of the discovered patterns
(e.g., in the form of $p$-values).
However,
properly evaluating the statistical significance of pattern mining results is quite challenging
because the \emph{selection effect} of the mining process
must be taken into account.
Noting that
predictive pattern mining algorithms are designed
to \emph{select} patterns
which are more associated with the response than other patterns in the database,
even if all the patterns in the database have no true associations, 
the discovered patterns would have apparent spurious associations by the \emph{selection effect}.
Such a distortion of statistical analysis is often referred to as
\emph{selection bias} \cite{heckman1979sample}.
\figurename~\ref{fig:selection-bias}
is a simple illustration of selection bias.

\begin{figure}[!h]
 \centering
 \begin{tabular}{cccccc}
  \includegraphics[width=0.13\linewidth]{./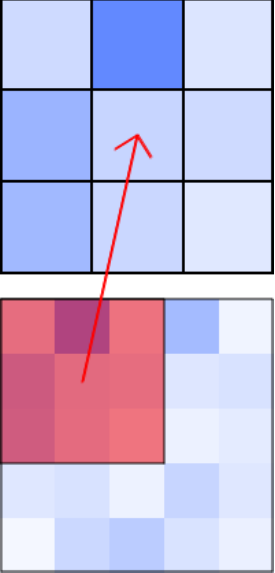} \!\!\!\! & \!\!\!\!
  \includegraphics[width=0.13\linewidth]{./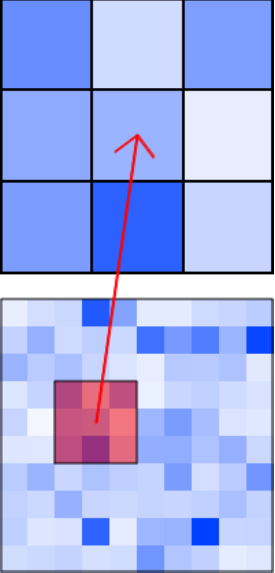} \!\!\!\! & \!\!\!\!
  \includegraphics[width=0.13\linewidth]{./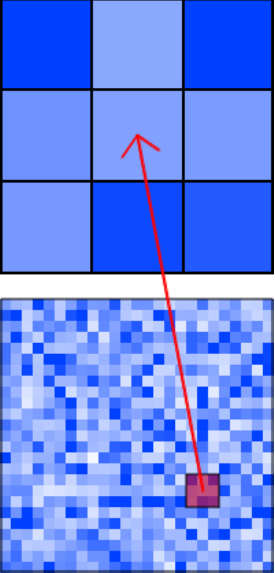} \!\!\!\! & \!\!\!\!
  \includegraphics[width=0.13\linewidth]{./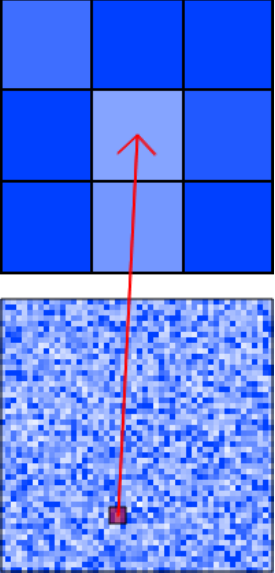} \!\!\!\! & \!\!\!\! 
  \includegraphics[width=0.13\linewidth]{./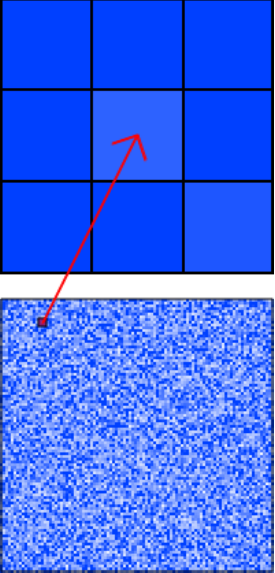} \!\!\!\! & \!\!\!\!
  \includegraphics[width=0.098\linewidth]{./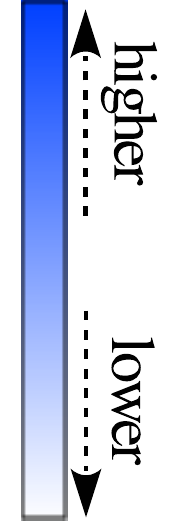} \\
  5$\times$5 & 10$\times$10 & 25$\times$25 & 50$\times$50 & \!\!\!\! 100$\times$100 \\
 \end{tabular}
 \includegraphics[width=0.60\linewidth]{./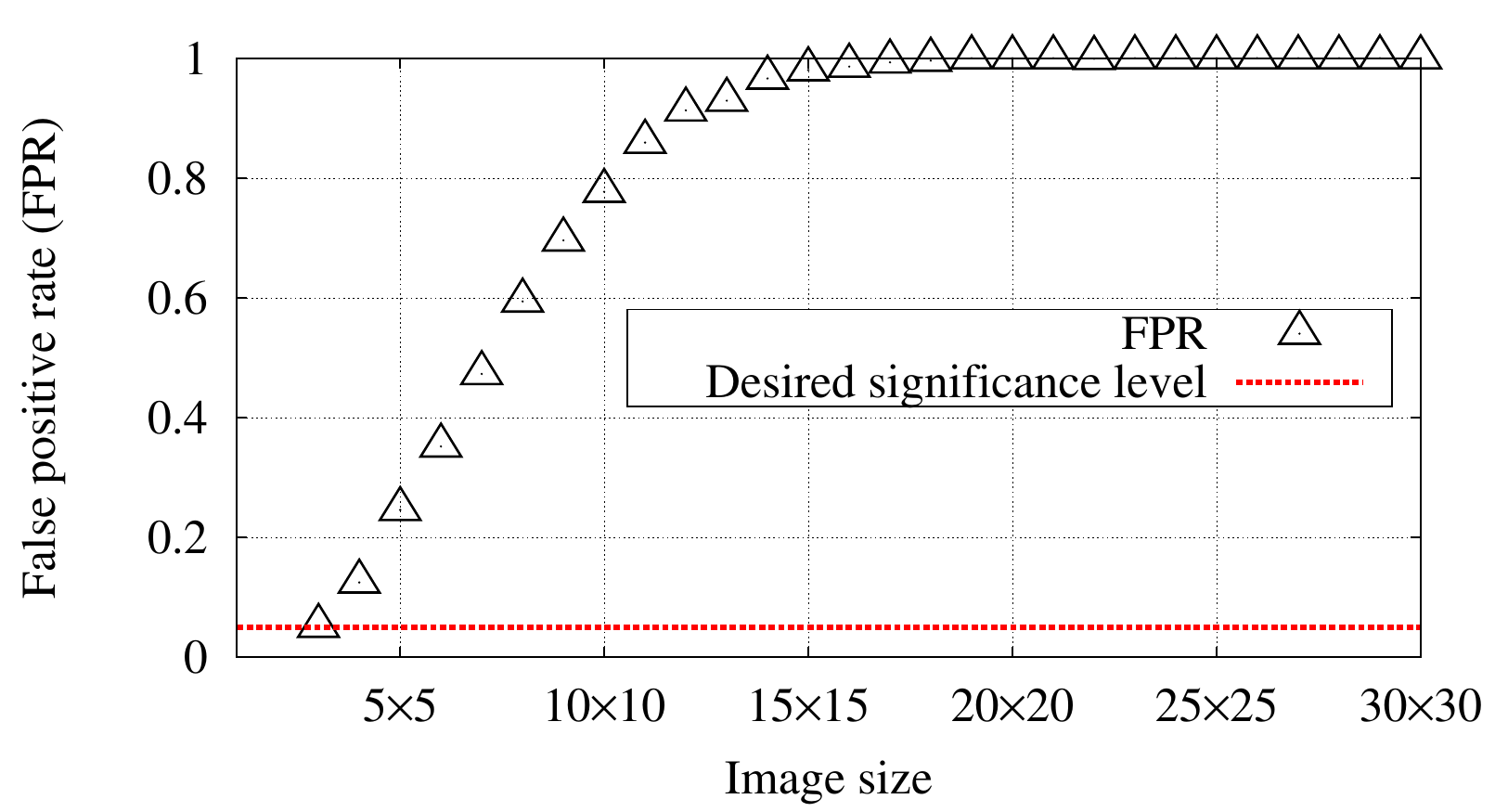}
 \caption{
 A simple demonstration of selection bias. 
 Here, 
 we randomly generated 
 $3 \times 3$,
 $\ldots$, 
 $100 \times 100$
 pixel images
 where the color of each pixel represents the value generated from 
 $N(0, 1^2)$.
 Then, 
 we selected the ``most blue'' 
 $3 \times 3$
 pattern 
 from each of these images. 
 We observe that 
 the selected pattern looks ``significantly blue'' 
 when it is selected from a large image,
 although it is merely spurious result due to the selection effect. 
 The bottom plot shows the frequencies of the false positive findings for various sizes of images
 obtained by applying naive statistical test
 for checking the statistical significance of the average value of the selected $3 \times 3$ pattern. 
 The false positive rates are far greater than the desired 5\% level 
 especially when the pattern is selected from large images. 
 In this paper,
 we introduce a novel approach that can address the selection bias issue
 for a class of predictive pattern mining problems. 
 }
 \label{fig:selection-bias}
\end{figure}

In this paper, we introduce a new approach for solving the selection bias issue for a class of predictive pattern mining problems.
Our new approach is built on a framework called \emph{selective inference} (see, e.g., \cite{taylor2015statistical}).
The main idea of selective inference is that, by considering a sample space conditional on a particular selection event, we do not have to mind the bias stemming from the selection event. 
%\footnote{A formal description of selective inference is presented in \S\ref{subsec:selective-inference}.}.
%
In the context of pattern mining,
it roughly indicates that, if we make a statistical inference (computing $p$-values or confidence intervals etc.) based on a sampling distribution under the condition that a particular set of patterns are discovered, the selection bias of the mining algorithm could be removed. 

Although
the concept of selective inference
has long been discussed in the statistics community,
no practical selective inference frameworks have been 
developed until very recently \cite{berk2013valid}.
The difficulty of selective inference lies in the requirement that 
we must be able to derive the sampling distribution of the test statistic under the condition that the selection event actually takes place.
Although deriving such a sampling distribution is generally intractable,
Lee et al. \cite{lee2013exact} recently proposed a practical selective inference procedure for feature selection problems in linear models.
Specifically,
they provided a method
for computing the sampling distributions
of the selected linear model parameters
under the condition that
a particular set of features are selected
by using a certain class of feature selection algorithms. 

Our main contribution in this paper
is to extend the idea of Lee et al~\cite{lee2013exact},
and develop a selective inference procedure
for a class of predictive pattern mining problems.
We develop a novel method
for computing the exact sampling distribution of a relevant test statistic on the discovered patterns 
under the condition that
those patterns are discovered 
by using the mining algorithm.
We note that
this extension is non-trivial
because we need to take into account extremely large number of patterns
in the database.
%in order to characterize a selection event
%that a particular set of patterns are more associated with the response than the other patterns. 
%
For circumventing this computational issue,
we consider a tree structure among patterns and derive a novel pruning condition
that enables us to efficiently identify a set of patterns 
which have no effect on the sampling distribution. 
To the best of our knowledge, 
this paper is the first to address selection bias issue in pattern mining via selective inference framework.
The above pruning rule enables us to develop a practical selective inference framework 
that can be applied to a class of predictive pattern mining problems
in which extremely large number of patterns are involved. 

\subsection{Related approaches}
\label{subsec:related-approaches}
In most existing pattern mining procedures,
the reliability of the discovered patterns are quantified by non-statistical measures such as
\emph{support}, 
\emph{confidence},
\emph{lift}
or
\emph{leverage}
\cite{agrawal1993mining}. 
These non-statistical measures are easy to interpret and would be sufficient for some applications.
However,
when the data is noisy and considered to be a random sample from the population of interest,
it is desired to provide statistical significance measures such as 
$p$-values or confidence intervals
for each of the discovered patterns.
Although
several researchers in data mining community studied how to compute statistical significances of the discovered patterns 
\cite{bay2001detecting,yan2008mining,hamalainen2010statapriori,arora2014mining},
%\cite{bay2001detecting,hamalainen2010statapriori,arora2014mining},
the reported $p$-values in these studies are biased 
in the sense that the selection effect of the mining algorithms are not taken into account
(unless a multiple testing correction procedure is applied to these $p$-values afterward). 

In machine learning community, the most common approach for dealing with selection bias is \emph{data splitting}.
In data splitting, the dataset is divided into two disjoint sets.
One of them is used for pattern discovery and the other is used for statistical inference.
Since the inference phase is made independently of the discovery phase, we do not have to care about the selection effect.
An obvious drawback of data splitting is that the \emph{powers} are low both in discovery and inference phases.
Since only a part of the dataset can be used for mining, the risk of failing to discover truly associated patterns would increase.
Similarly, the power of statistical inference (i.e., the probability of true positive finding) would decrease because the inference is made with a smaller dataset.
In addition, it is quite annoying that different patterns might be discovered if the dataset is split differently.
It is important to note that data splitting is also regarded as a selective inference because the inference is made only for the discovered patterns in the discovery phase, and the other undiscovered patterns are ignored. 

In statistics community, \emph{multiple testing correction (MTC)} has been used for addressing selection bias issue 
%\cite{shaffer1995multiple}.
\cite{shaffer1995multiple,jensen2000multiple}.
MTC methods have been developed for simultaneously control the false positive errors of multiple hypothesis tests (which is sometimes called \emph{simultaneous inference}).
For example, the most common measure for multiple hypothesis testing is \emph{family-wise error (FWE)}, the probability of finding one or more false positives in the multiple tests.
If a MTC method assures FWE control, then the method is also valid for selection bias correction in the sense that the probability of false positive finding can be smaller than the specified significance level $\alpha$.
A notorious drawback of MTC is that they are highly conservative when the number of tests is large, meaning that the power of inference is very low.
For example, in Bonferroni correction method, one can declare a pattern to be positive only when its nominal $p$-value is smaller than $\alpha/J$, where $J$ is the number of all possible patterns in the database \cite{webb2007discovering}~\footnote{
Recently, Terada et al. \cite{terada2013statistical} pointed out in pattern mining context that the denominator of the multiple testing correction can be smaller than $J$ for a certain type of statistical inferences (such as Fisher exact test) by using an idea by Tarone \cite{tarone1990modified},
and several subsequent works in the same direction have been presented
%\cite{sugiyama2015significant,lopez2015fast}.
\cite{terada2013fast,minato2014fast,sugiyama2015significant,lopez2015fast}.
}.
Since the number of tests (i.e., the number of all possible patterns $J$) is extremely large, the use of a multiple testing correction usually results in very few significant pattern findings. 

If we use proper selective inference method, 
the corrected $p$-values (called \emph{selective $p$-values} hereafter) of the discovered patterns can be regarded as nominal $p$-values just like they were obtained without selection.
For example,
if we want to control FWE within the discovered patterns, 
we can use Bonferroni correction
just like we only had the discovered patterns from the beginning,
i.e., we declare a pattern to be positive if its selective $p$-value is less than $\alpha/k$ where $k$ is the number of the discovered patterns.
It is interesting to note that,
when $k = J$,
i.e.,
when all the $J$ patterns in the database are \emph{discovered},
the selective inference followed by Bonferroni correction approach 
coincides with the simultaneous inference in the previous paragraph.
In many pattern mining tasks,
simultaneous inference would not be necessary and selective inference would be sufficient 
because we are only interested in the discovered patterns, 
and do not care about the other patterns in the database \cite{benjamini2010simultaneous}. 
In \cite{webb2007discovering},
the author suggested 
to use data splitting approach at first,
and then apply statistical inference with Bonferroni correction
for controlling FWE 
within the discovered patterns.
His approach is similar in spirit with the above selective inference followed by Bonferroni correction approach.

\subsection{Notation and outline}
\label{subsec:notation-outline}
We use the following notations in the remainder. 
For any natural number $n$,
we define
$[n] := \{1, \ldots, n\}$. 
A vector and a matrix is denoted 
such as
$\bm v \in \RR^n$
and 
$M \in \RR^{n \times m}$,
respectively.
%
%For a set of $d$ binary items 
%$T := \{item_1, \ldots, item_d\}$, 
%the power set over $T$ is denoted as $2^T$
%(we regard the power-set
%$2^T$
%contains 
%$2^d - 1$
%patterns
%excluding the empty one). 
%
The index function is written as
$\one\{z\}$
which returns $1$ if $z$ is true, and $0$ otherwise. 
The sign function is written as
${\rm sgn}(z)$
which returns $1$ if $z \ge 0$, and $-1$ otherwise.
An $n \times n$ identity matrix is denoted as $I_n$.
%
%We use a notation 
%$F_{\mu, \sigma^2}^{[a, b]}$
%for representing the cummulative distribution function (c.d.f.) of the truncated Normal distribution 
%which is defined by truncating the c.d.f. of a Normal distribution $N(\mu, \sigma^2)$ at $[a, b]$,
%i.e., 
%\begin{align*}
% F_{\mu, \sigma^2}^{[a, b]}(z)
% :=
% \frac{
% \Phi((z - \mu)/\sigma) - \Phi((a - \mu)/\sigma)
% }{
% \Phi((b - \mu)/\sigma) - \Phi((a - \mu)/\sigma) 
% },
%\end{align*}
%where
%$\Phi$ is the c.d.f. of $N(0, 1)$.

Here is the outline of the paper. 
\S\ref{sec:preliminary}
presents
problem formulation,
illustrative example,
formal description of selective inference,
and
a brief review of recent selective inference literature.
\S\ref{sec:SI-for-PM}
describes
our main contribution,
where 
we develop a method
that enables selective inference
for a class of discriminative pattern mining problems.
\S\ref{sec:extensions-generalizations}
discusses
extensions and generalizations. 
\S\ref{sec:experiments}
covers numerical experiments
for demonstrating the advantage of selective inference framework
in the context of pattern discovery.
\S\ref{sec:conclusion} 
concludes the paper.

\section{Preliminaries}
\label{sec:preliminary}
In this section,
we first formulate the problem considered in this paper.
Although the selective inference can be similarly applied to
wider class of pattern mining problems
than we consider here,
we study a specific predictive item-set mining problem for concreteness.
Extensions and generalizations are discussed in \S\ref{sec:extensions-generalizations}. 
After presenting a simple illustrative example
in \S\ref{subsec:illustrative-example},
we formally describe selective inference framework
and 
explain why it can be used for addressing selection bias problems in
\S\ref{subsec:selective-inference}.
Finally, 
we review a recent result on selective inference 
by Lee et al.\cite{lee2013exact},
which is the core basis of our main contribution in \S\ref{sec:SI-for-PM}. 

\subsection{Problem statement}
\label{subsec:problem-statement}
We study predictive item-set mining problems
with continuous responses
\cite{deshpande2005frequent,saigo2008partial,saigo2009gboost,ketkar2009gregress}. 
Let us consider a database with $n$ transactions, 
which we denote as
$D := \{(T_i, y_i)\}_{i \in [n]}$. 
Each transaction consists of a subset of $d$ binary items 
$T_i \subseteq T := \{i_1, \ldots, i_d\}$
and a response
$y_i \in \RR$,
where we assume that the latter is centered so that
$\sum_{i \in [n]} y_i = 0$.
We sometimes use a compact notation $D = (\cT, \bm y)$ 
where
$\cT := \{T_i\}_{i \in [n]}$
and
$\bm y := [y_1, \ldots, y_n]^\top \in \RR^n$. 
%
%
%We denote the $n$ responses altogether by an $n$-dimensional vector $\bm y := [y_1, \ldots, y_n]^\top \in \RR^n$,
%and assume that they are standardized to have the mean 0 and the variance 1.
%
%The set of possible item-set patterns
%(called \emph{patterns} hereafter)
%is the power-set $2^{T}$
%over
%$T$. 
%
We sometimes restrict our attention on item-sets of the sizes no greater than $r$. 
The set of all those patterns is denoted as
$\cJ := \{t \mid t \in 2^T, |t| \le r\}$, 
its size as
$J := |\cJ| = \sum_{\rho \in [r]} {d \choose \rho}$,
and 
each pattern in $\cJ$ as 
$t_1, \ldots, t_{J} \in \cJ$,
where
$2^T$ is the power set of $T$. 
Similarly,
for each transaction,
the set of patterns in $T_i$ of the sizes no greater than $r$
is denoted as
$\cJ_i$. 
For representing whether each pattern in $\cJ$ is included in a transaction, 
we define 
\begin{align}
\label{eq:occurrence_element}
 \tau_{i, j} := \mycase{
1 & \text{if } t_j \in \cJ_i, \\
0 & \text{if } t_j \notin \cJ_i,
}
\end{align}
for $(i, j) \in [n] \times [J]$.
A vector notation 
$\bm \tau_j := [\tau_{1,j}, \ldots, \tau_{n, j}]^\top \in \{0, 1\}^n$
is used for representing the occurrence of the $j$-th pattern. 

Consider the following concrete example for intuitive understanding of our notations:
\begin{align*}
D = \{
(\{{\rm A, B, C}\}, y_1),
(\{{\rm A, C}\}, y_2),
(\{{\rm B}\}, y_3)
\},
\end{align*}
where we have $n=3$ transactions and $d=3$ items A, B and C.
If we set $r = 2$, $J=5$ patterns\footnote{Note that we do not consider an empty set as a pattern. } are 
\begin{align*}
\begin{tabular}{rcccccl}
 $\cJ$ = \{ \!\!\!\!&\!\!\!\! \{A\}, \!\!&\!\! \{B\}, \!\!&\!\! \{C\}, \!\!&\!\! \{A,B\}, \!\!&\!\! \{A,C\} \!\!\!\!&\!\!\!\! \}. \\
 \phantom{$\cJ$ = } & $\downarrow$ & $\downarrow$ & $\downarrow$ & $\downarrow$ & $\downarrow$ & \\
 \phantom{$\cJ$ = } & $t_1$ &$t_2$ &$t_3$ &$t_4$ &$t_5$ &
\end{tabular}
\end{align*}
Similarly, 
the set of patterns for each transaction are 
\begin{align*}
 \cJ_1 &= \left\{\rm \{A\}, \{B\}, \{C\}, \{A,B\}, \{A,C\} \right\}, \\
 \cJ_2 &= \left\{\rm \{A\}, \{C\}, \{A,C\} \right\}, \\
 \cJ_3 &= \left\{\rm \{B\}\right\}.
\end{align*}
Alternatively,
the occurrence of patterns are represented by the following $n$-by-$J$ matrix information whose $(i, j)$-th element is $\tau_{i, j}$: 
\begin{align*}
 \mtx{ccccc}{
 1 & 1 & 1 & 1 & 1 \\
 1 & 0 & 1 & 0 & 1 \\
 0 & 1 & 0 & 0 & 0 \\
 }.
\end{align*}
The occurrence of each of the $J=5$ patterns is the column of the above matrix, i.e.,
\begin{align*}
 \bm \tau_1 = \mtx{c}{1 \\ 1 \\ 0},
 \bm \tau_2 = \mtx{c}{1 \\ 0 \\ 1},
 \bm \tau_3 = \mtx{c}{1 \\ 1 \\ 0},
 \bm \tau_4 = \mtx{c}{1 \\ 0 \\ 0},
 \bm \tau_5 = \mtx{c}{1 \\ 1 \\ 0}.
\end{align*}

In the statistical inference framework we discuss here, 
we assume that
the response $y_i$ is a sample from a Normal distribution
$N(\mu(T_i), \sigma^2)$,
where
$\mu(T_i)$
is the unknown mean
that possibly depends on the occurrence of patterns in $T_i$, 
and $\sigma^2$ is the known variance. 
Assuming the homoscedasticity and independence, 
the statistical model on which the inference is made is written as 
\begin{align}
 \label{eq:stat-model}
 \bm y \sim N(\bm \mu(\cT), \sigma^2 I_n), 
\end{align}
where
$\bm \mu(\cT) := [\mu(T_1), \ldots, \mu(T_n)]^\top \in \RR^n$. 

The goal of the problem we consider here is to discover patterns that are statistically significantly associated with the response. 
For each pattern 
$t_j \in \cJ$, 
we define a statistic 
$s_j := \bm \tau_j^\top \bm y$
for
$j \in [J]$ 
in order to quantify the strength of the association with the response.
Noting that $\{y_i\}_{i \in [n]}$ are centered, 
the statistic
$s_j$
would have positive (resp. negative) values 
when the occurrence of the pattern $t_j$
is positively (resp. negatively) associated with the response. 

For concreteness,
we consider pattern mining algorithms for discovering the top $k$ patterns 
based on the statistic
$\{s_j\}_{j \in [J]}$. 
%
%If we are only interested in finding positive (resp. negative) associations, 
%we use an item-set mining algorithm
%that would discover the top $k$ patterns
%whose
%$s_j$
%are greater (resp. smaller) than the others. 
%%
%On the other hand,
%if we want to find both positive and negative associations,
%we use an item-set mining algorithm 
%that would discover the top $k$ patterns
%whose absolute statistic $|s_j|$
%are greater than others. 
%
We denote the set of indices of those $k$ discovered patterns as $\cK \subset [J]$,
i.e.,
$|\cK| = k$. 
The goal of this paper is to introduce a procedure 
for providing the statistical significances of the associations in the form of $p$-values 
for those $k$ discovered patterns in $\cK$.

\subsection{An illustrative example}
\label{subsec:illustrative-example}
We illustrate basic concepts of selective inference by a toy example with $n=2$ transactions and $d=2$ items. 
Consider a database $D := \{(\{i_1\}, -1.5), (\{i_2\}, 1.8)\}$.
Since $d=2$, we have $2^2 - 1 = 3$ patterns: $t_1 := \{i_1\}$, $t_2 := \{i_2\}$ and $t_3 := \{i_1, i_2\}$, and the occurrence vectors of these three patterns are $\bm \tau_1 = [1, 0]^\top$, $\bm \tau_2 = [0, 1]^\top$ and $\bm \tau_3 = [0, 0]^\top$.
Suppose that we select only $k=1$ pattern whose association $s_j = \bm \tau_j^\top \bm y, j \in \{1, 2, 3\}$, is greatest.
Since $s_1 = y_1 = -1.5$, $s_2 = y_2 = 1.8$ and $s_3 = 0.0$, the second pattern $t_2$ would be selected here. 

Consider a null hypothesis $H_0$ that $\bm y = [y_1, y_2]^\top$ is from $N(\bm 0, I_2)$.
In naive statistical inference, under $H_0$, the $p$-value of the observed $s_2 = 1.8$ is given by
\begin{align}
 \label{eq:naive-p}
 p = {\rm Prob}(s_2 > 1.8 \mid y_1 = -1.5, H_0) \simeq 0.036 < 0.05, 
\end{align}
meaning that one would conclude that the association of the pattern $t_2$ is significant at $\alpha=0.05$ level.
In selective inference, the statistical significance is evaluated conditional on the selection event that the pattern $t_2$ is selected.
Thus, the selective $p$-value is given by
\begin{align}
 \label{eq:selective-p}
 p = {\rm Prob}(s_2 > 1.8 \mid s_2 = \max\{s_1, s_2, s_3\}, y_1 = -1.5, H_0) \simeq 0.072 > 0.05, 
\end{align}
meaning that one would conclude that the association of the pattern $t_2$ is NOT significant at $\alpha=0.05$ level if we consider the fact that $t_2$ was selected. 

In order to compute selective $p$-values in the form of \eq{eq:selective-p}, we need to characterize the condition $s_2 = \max\{s_1, s_2, s_3\}$ in a tractable way.
In this extremely simple toy example, the condition can be simply written as
\begin{align}
\label{eq:discovered-pattern2}
\bm \tau_2^\top \bm y \ge \bm \tau_1^\top \bm y, \bm \tau_2^\top \bm y \ge \bm \tau_3^\top \bm y ~\Leftrightarrow~ y_2 \ge y_1, y_2 \ge 0. 
\end{align}
It means that the conditional probability in \eq{eq:selective-p} is rephrased as ${\rm Prob}(s_2 \mid y_2 \ge y_1, y_2 \ge 0, y_1 = -1.5, H_0)$. 

\figurename~\ref{fig:toy2d_space} shows the two dimensional sample space of $\bm y = [y_1, y_2]^\top$, where the space is divided into three regions depending on which of the three patterns $t_1$, $t_2$ or $t_3$ would be selected.
The problem of computing the conditional probability in \eq{eq:selective-p} can be interpreted as the problem of computing the probability of $s_2$ conditional on an event that $\bm y$ is observed somewhere in the pink region in \figurename~\ref{fig:toy2d_space}. 
The figure also shows \emph{critical regions} in which $p$-values in \eq{eq:naive-p} or \eq{eq:selective-p} are smaller than 0.05.
In naive inference, $s_2$ is declared to be significantly large if it is greater than $\Phi(0.95)$,
where $\Phi$ is the cumulative distribution function of $N(0, 1^2)$. 
On the other hand, in selective inference, $s_2$ is declared to be significantly large if it is large enough even if we take into account the fact that $s_2$ is greater than $s_1$ and $s_3$. 
\figurename~\ref{fig:toy2d_density} shows the naive sampling distribution in \eq{eq:naive-p} and the selective sampling distribution in \eq{eq:selective-p}. 
The critical region and the sampling distribution of selective inference in {\figurename}s~\ref{fig:toy2d_space} and \ref{fig:toy2d_density} are obtained by using the framework we discuss later.

\begin{figure}[!ht]
 \centering
 \includegraphics[width=0.70\linewidth]{./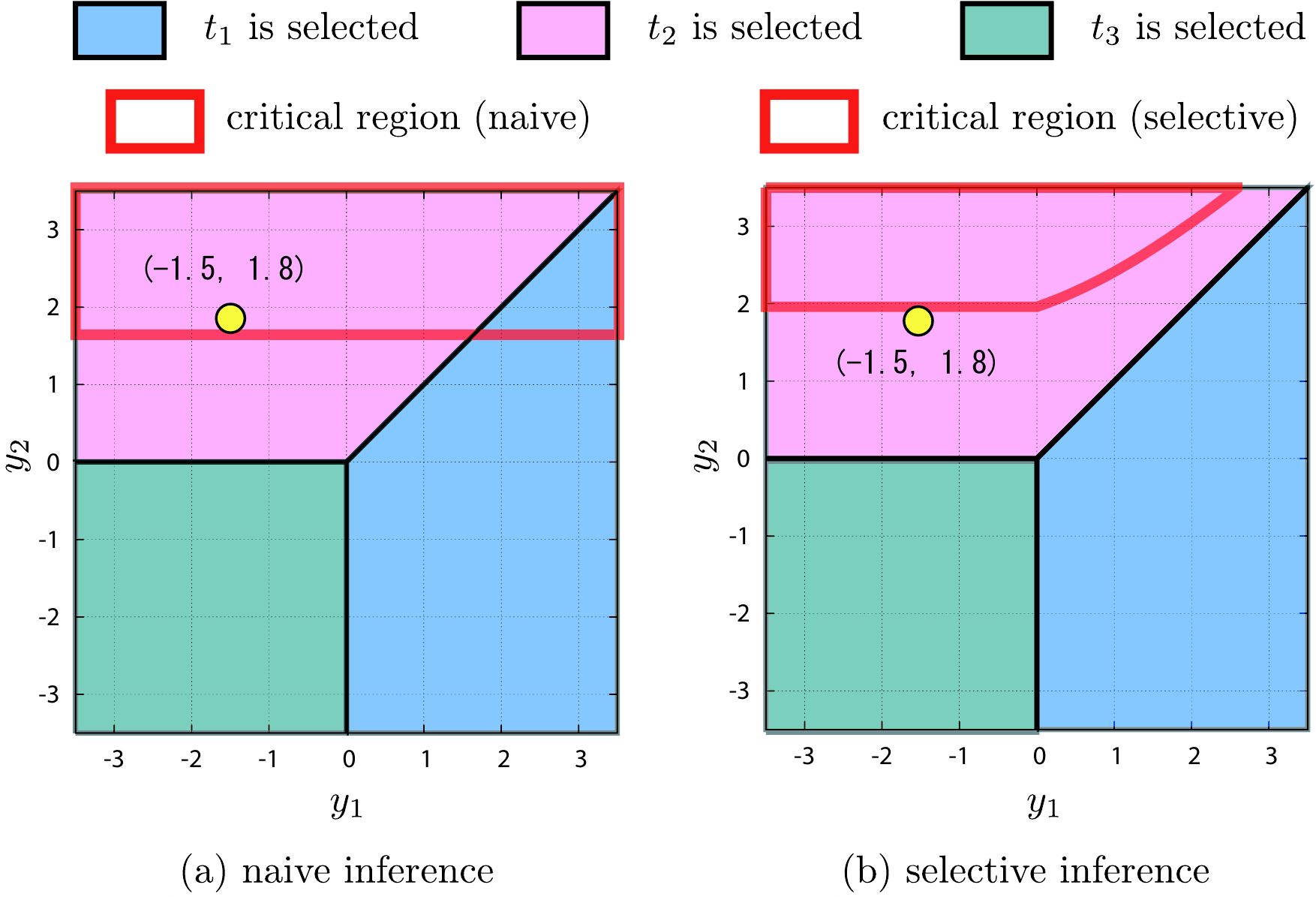}
 \caption{
 Two-dimensional sample space in the toy example where the observation $(y_1, y_2) = (-1.5, 1.8)$ is shown by yellow circle. The space is divided into three regions depending on which of $t_1$, $t_2$ and $t_3$ is selected. Critical regions of the naive inference (left) and the selective inference (right) are shown.}
 \label{fig:toy2d_space}
\end{figure}

\begin{figure}[!ht]
 \centering
 \includegraphics[width=0.7\linewidth]{./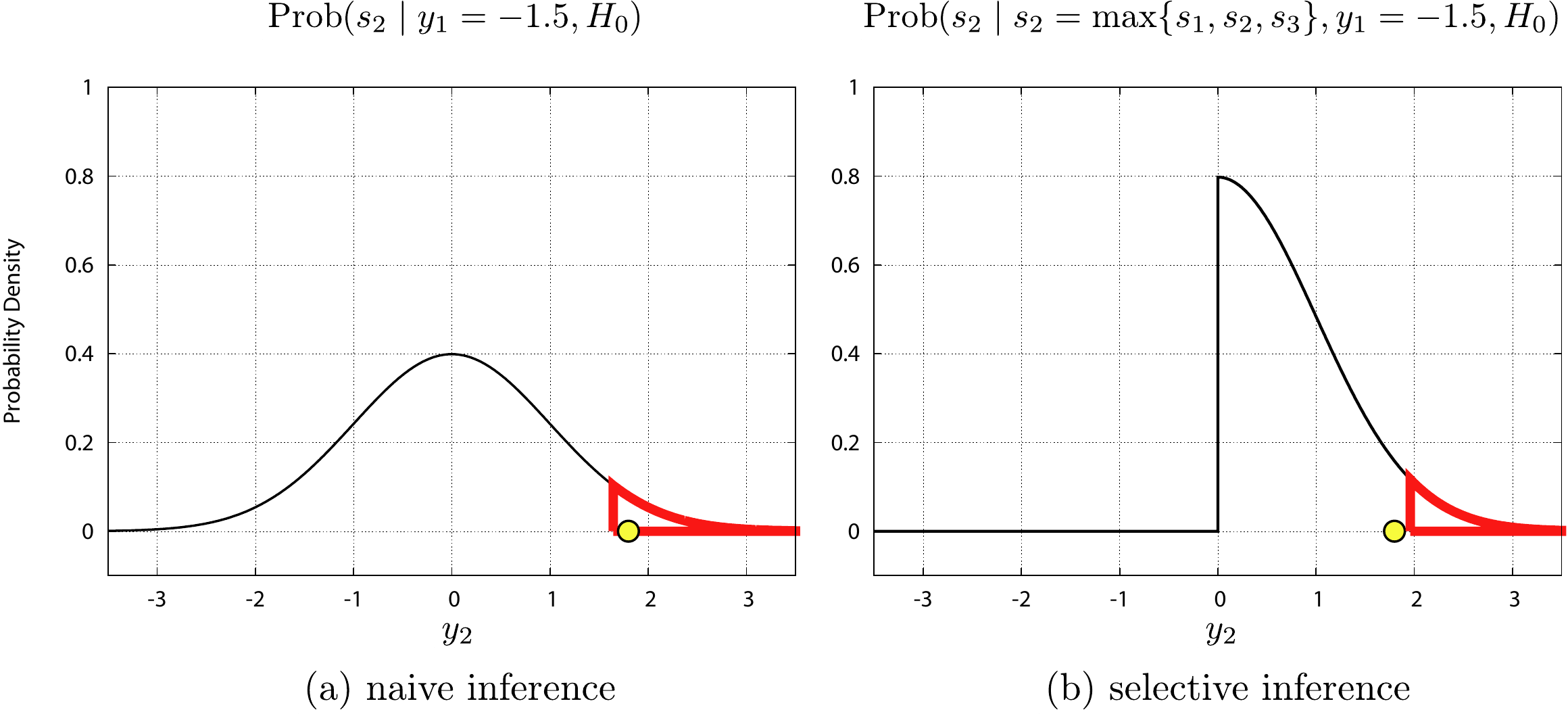}
 \caption{
 Naive sampling distribution (left) and selective sampling distribution (right) of the test statistic $s_2$ in the toy example. 
 The latter is a truncated Normal distribution because it is defined only in the region where $t_2$ is selected (the pink region in \figurename~\ref{fig:toy2d_space}).
 Critical regions and the observation ($y_2 = 1.8$) are shown similarly as in \figurename~\ref{fig:toy2d_space}. 
 }
 \label{fig:toy2d_density}
\end{figure}

\subsection{Selective inference}
\label{subsec:selective-inference}
In this subsection
we formally present
selective inference framework
in the context of
predictive pattern mining problems.
%described in \S\ref{subsec:problem-statement}. 
%
See 
\cite{fithian2014optimal}
for a general comprehensive formulation of selective inference framework. 
An inference on  
$s_j = \bm \tau_j^\top \bm y$
is made conditional on its orthogonal component 
in the sample space 
(as the inference  is conditioned on $y_1$ in the toy example in \S\ref{subsec:illustrative-example}). 
We denote the event that the orthogonal component is $\bm w \in \RR^n$ as $\cW(\bm y) = \bm w$. 

We consider the following two-phase procedure:
\begin{enumerate}
 \item 
       {\bf Discovery phase}: 
       Discover a set of patterns
      $\{t_j\}_{j \in \cK}$
      by applying a pattern mining algorithm $\cA$ to the database $D = (\cT, \bm y)$.
      We denote the discovery phase as $\cK = \cA(\cT, \bm y)$.

 \item 
       {\bf Inference phase}:
       For each discovered pattern
       $t_j, j \in \cK$, 
       compute the statistical significance of the association 
       by using a selective inference 
       conditional on an event that the patterns $\{t_j\}_{j \in \cK}$ are discovered. 
\end{enumerate}

The selective inference is conducted under the statistical model \eq{eq:stat-model}. 
In order to test the association between a discovered pattern $t_j$ and the response $\bm y$, we consider the following null hypothesis:
\begin{align}
 \label{eq:null-hyp-uni}
 H_0: \bm \tau_j^\top \bm y \sim N(0, \sigma^2 \|\bm \tau_j\|_2^2). 
\end{align}
Under $H_0$, we define \emph{the selective $p$-value} as 
\begin{align}
 \label{eq:selective-pvalue-uni}
 p_j^{(\cK)} := {\rm Prob}(\bm \tau_j^\top \bm y> s_j \mid \cK = \cA(\cT, \bm y), \cW(\bm y) = \bm w, H_0),
\end{align}
where the superscript 
$^{(\cK)}$
indicates that the selective $p$-values are defined under the condition that the patterns $\{t_j\}, j \in \cK$ are discovered in the first phase. 

%\subsubsection{Finding positive and negative associations}
%\label{subsubsec:both-pos-neg}
%%
%If we are interested in finding both positive and negative associations, we need to slightly change the above procedure.
%%
%First, 
%the first discovery phase is changed as 
%\begin{align*}
% \tilde{\cK} = \cA(\cT, \bm y), 
%\end{align*}
%where
%$\tilde{\cK} := \{(j, {\rm sgn}(s_j))\}_{j \in \cK}$,
%i.e.,
%it contains the pairs of index and the sign of the association for all the discovered patterns. 
%%
%In the inference phase,
%the definition of selective $p$-values depends on the sign of the association in the following way:
%\begin{align}
% \label{eq:selected-pvalue-for-both-side}
% p_j^{(\tilde{\cK})} := \mycase{
% {\rm Prob}(\bm \tau_j^\top \bm y > s_j \mid \tilde{\cK} = \cA(\cT, \bm y), H_0) & \text{if } {\rm sgn}(s_j) > 0, \\
% {\rm Prob}(\bm \tau_j^\top \bm y < s_j \mid \tilde{\cK} = \cA(\cT, \bm y), H_0) & \text{if } {\rm sgn}(s_j) < 0.
% }
%\end{align}
%%
%The definition in \eq{eq:selected-pvalue-for-both-side}
%is based on the idea that, 
%if a pattern is discovered in the first phase because of its high positive (resp. negative) association, 
%we would be only interested in testing whether the positive (resp. negative) association is statistically significant or not after correcting the selection bias. 
%%
%In the remainder of this section,
%we only discuss the case in
%\S\ref{subsubsec:only-pos-neg}
%although 
%all the discussions can be similarly applied to case in 
%\S\ref{subsubsec:both-pos-neg}.

\subsubsection{Properties of selective $p$-values}
\label{subsubsec:propery-selective-p}
%
%Selective $p$-values
%in \eq{eq:selective-pvalue-uni}
%can be treated
%as if
%there were no discovery phase in advance 
%because the selection effect is properly incorporated by conditioning on the selection event. 
%
Let us define a test $\phi$ as 
\begin{align}
 \label{eq:selective-test}
 \phi(s_j, \cK) = \mycase{
 \text{negative} & \text{if } p_j^{(\cK)} \ge \alpha, \\
 \text{positive} & \text{if } p_j^{(\cK)} < \alpha.
 }
\end{align}
Then,
the probability of \emph{selective false positive error} can be smaller than the significance level $\alpha$, i.e., 
\begin{align*}
 %\label{eq:selective-false-positive-error}
 {\rm Prob}(\phi(s_j, \cK) = \text{positive} \mid \cK = \cA(\cT, \bm y), \cW(\bm y) = \bm w, H_0) < \alpha. 
\end{align*}
%
%The property \eq{eq:selective-false-positive-error}
This can be interpreted that,
when a set of patterns discovered by a mining algorithm $\cA$ is given to a user, and the user wants to judge each of the discovered pattern to be positive or negative, 
the test $\phi$ in \eq{eq:selective-test} allows the user to properly control the frequency of false positive findings.

Furthermore,
if a user wants to control family-wise error of the discovered patterns, 
then
we can apply, e.g., usual Bonferroni correction procedure, to the discovered patterns by regarding the $k$ selective $p$-values as the nominal $p$-values. 
Specifically,
let
%\begin{align}
% \label{eq:selective-fwe}
% {\rm FWE}_j^{(\cK)} := k p_j^{(\cK)}
% ~
% \text{ for }
% j \in \cK. 
%\end{align}
${\rm FWE}_j^{(\cK)} := k p_j^{(\cK)}$
for
$j \in \cK$. 
Then,
if we select the subset of the discovered patterns $\cK^\prime$ such that
%\begin{align}
% \label{eq:subset-fwe}
% \cK^\prime := \{j \in \cK \mid {\rm FWE}_j < \alpha\}, 
%\end{align}
$\cK^\prime := \{j \in \cK \mid {\rm FWE}_j < \alpha\}$, 
then,
we can guarantee that the probability of finding one or more false positives in $\cK^\prime$ is smaller than $\alpha$. 
We call ${\rm FWE}_j, j \in \cK, $ as \emph{Bonferroni-adjusted selective $p$-values} in \S\ref{sec:experiments}. 

We note that,
if we consider two different cases where different patterns $\cK_a$ and $\cK_b$ are discovered in the first phase, 
even when a pattern $t_j$ is discovered in both cases, 
the two $p$-values
$p_j^{(\cK_a)}$
and 
$p_j^{(\cK_b)}$
have different interpretations and cannot be compared. 
A key idea of selective inference is that the inference is made conditional on a single particular selection event $\cK = \cA(\cT, \bm y)$, and other cases are never considered. 
It is important to remind that the goal of selective inference is not to guarantee the goodness of the mining algorithm in the first phase, but to warrant the validity of the inference in the second phase. 

Another important note is about the null hypothesis
$H_0$
in \eq{eq:null-hyp-uni}.
When we specify the null distribution of the statistic $\bm \tau_j^\top \bm y$, we do not need to specify a null distribution of $\bm y \in \RR^n$ itself. 
In other words,
under any null distributions of $\bm y$ in the form of 
\begin{align}
 \label{eq:null-distribution-of-y}
 \bm y \sim N(\bm \mu(\cT), \sigma^2 I)
 \text{ such that }
 \bm \tau_j^\top \bm \mu(\cT) = 0,
\end{align}
the selective $p$-values in \eq{eq:selective-pvalue-uni} has desired property, 
meaning that we do not need to specify any prior knowledge about the data generating process except \eq{eq:null-distribution-of-y}. 
In the simulation study in \S\ref{subsec:illustrative-example},
the null distribution
$\bm y \sim N(\bm 0, \sigma^2 I)$
is just an instance of a class of distributions in the form of \eq{eq:null-distribution-of-y}.

\subsubsection{How to compute selective $p$-values}
\label{subsubsec:computing-selective-p}
The main technical challenge in selective inference 
is 
how we can compute selective $p$-values 
in the form of \eq{eq:selective-pvalue-uni}. 
To this end, 
we need to characterize the selection event 
$\cK = \cA(\cT, \bm y)$
in a tractable way. 
As in the toy example in \S\ref{subsec:illustrative-example},
a selection event that
a particular set of patterns are discovered
by a mining algorithm 
can be interpreted as an event that 
the response vector 
$\bm y$
is observed within a particular region in the sample space $\RR^n$. 
Denoting such a region as 
$\cR(\cK, \cA, \cT) \subseteq \RR^n$,
the above interpretation is formally stated as 
\begin{align*}
 \cK = \cA(\cT, \bm y)
 ~\Leftrightarrow~
 \bm y \in \cR(\cK, \cA, \cT).
% ~\text{ for all }
% \bm y \in \RR^n. 
\end{align*}

Recently,
Lee et al.~\cite{lee2013exact}
studied a class of feature selection methods 
in which a selection event can be represented
by a set of linear inequalities in the sample space $\RR^n$,
which they call a \emph{linear selection event}. 
In a linear selection event, 
the region 
$\cR(\cK, \cA, \cT)$ 
is a polyhedron.
The authors in \cite{lee2013exact}
showed that,
when 
$\cR(\cK, \cA, \cT)$
is a polyhedron,
the sampling distribution conditional on the polyhedron is a truncated normal distribution, 
and the truncation points are obtained by solving optimization problems over the polyhedron. 
For a class of feature selection problems considered in \cite{lee2013exact}, 
it is possible to solve the optimization problems, 
and the selective $p$-values can be computed with reasonable computational cost. 

In \S\ref{sec:SI-for-PM}, 
we see that 
an event of selecting the top $k$ patterns 
according to the association scores
$s_j = \bm \tau_j^\top \bm y, j \in [J]$, 
can be also represented as a polyhedron in the sample space $\RR^n$. 
Unfortunately,
however,
the polyhedron is potentially characterized by an extremely large number of linear inequalities, 
and
it turns out to be difficult to solve the optimization problems over the polyhedron 
as is done in \cite{lee2013exact}. 
Our main contribution in this paper is to overcome this difficulty 
by developing a novel algorithm
for efficiently identifying linear inequalities that are guaranteed to be irrelevant to the selective sampling distribution.
After briefly reviewing the result of \cite{lee2013exact} in \S\ref{subsec:polyhedral-lemma},
we present selective inference framework for the pattern mining problems in \S\ref{sec:SI-for-PM}.

\subsection{Polyhedral lemma by Lee et al.~\cite{lee2013exact}}
\label{subsec:polyhedral-lemma}
In this subsection, we summarize the recent result by Lee et al.~\cite{lee2013exact}. 
\begin{lemm}[Polyhedral Lemma \cite{lee2013exact}]
\label{lemm:polyhedral}
Consider a linear selection event that the corresponding region 
$\cR(\cK, \cA, \cT)$
is a polyhedron, 
and denote it as 
${\rm Pol}(\cK, \cA, \cT)$. 
For a statistic
in the form of
$\bm \eta^\top \bm y$
with an arbitrary
$\bm \eta \in \RR^n$,
under a null hypothesis
$H_0: \bm \eta^\top \bm \mu(\cT) = 0$
in the statistical model \eq{eq:stat-model}, 
the sampling distribution of
$\bm \eta^\top \bm y$ 
conditional on a selection event
$\bm y \in {\rm Pol}(\cK, \cA, \cT)$ 
can be written as
\begin{align*}
 {\rm Prob}(\bm \eta^\top \bm y \le s \mid \bm y \in {\rm Pol}(\cK, \cA, \cT), \cW(\bm y) = \bm w, H_0)) 
 \sim F_{0, \sigma^2 \|\bm \eta\|_2^2}^{[L(\bm w), U(\bm w)]}(s)
\end{align*} 
%\begin{align}
% \begin{split}
% \label{eq:polyhedral-lemma}
% &{\rm P}(\bm \eta^\top \bm y \le s \mid \bm y \in {\rm Pol}(\cK, \cA, \cT), \cW(\bm y) = \bm w, H_0) 
% \\
% &\sim
% F_{0, \sigma^2 \|\bm \eta\|_2^2}^{L(\bm w), U(\bm w)}(s), 
% \end{split}
%\end{align}
where
$F_{m, s^2}^{[L(\bm w), U(\bm w)]}$
%which
represents the cumulative distribution function (c.d.f.) of the truncated Normal distribution 
which is defined by truncating the c.d.f. of a Normal distribution
 $N(m, s^2)$
 at $[L(\bm w), U(\bm w)]$,
and the truncation points
$L(\bm w)$ and $U(\bm w)$
%in \eq{eq:polyhedral-lemma}
are given as 
\begin{subequations}
 \label{eq:truncation-points}
 \begin{align}
  \nonumber
  L(\bm w) &:=  \bm \eta^\top \bm y + \theta_{\rm min} \|\bm \eta\|_2^2
  ~\text{ where }~ \\
  &\theta_{\rm min} :=
  \min_{\theta \in \RR} ~ \theta
  ~\text{ s.t. }~ \bm y + \theta \bm \eta \in {\rm Pol}(\cK, \cA, \cT),
  \\
  \nonumber
  U(\bm w) &:=  \bm \eta^\top \bm y + \theta_{\rm max} \|\bm \eta\|_2^2
  ~\text{ where }~ \\
  &\theta_{\rm max} :=
  \max_{\theta \in \RR} ~ \theta
  ~\text{ s.t. }~ \bm y + \theta \bm \eta \in {\rm Pol}(\cK, \cA, \cT).
 \end{align}
\end{subequations}
\end{lemm}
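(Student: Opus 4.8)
The plan is to follow the decomposition argument underlying Lee et al.~\cite{lee2013exact}: split $\bm y$ into the component along $\bm \eta$ and its orthogonal complement, exploit Gaussian independence of the two, and then show that conditioning on the polyhedron reduces to truncating a one-dimensional Gaussian to an interval whose endpoints are exactly $L(\bm w)$ and $U(\bm w)$.

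First I would make the orthogonal decomposition explicit. Write $\bm y = \bm w + \frac{\bm \eta}{\|\bm \eta\|_2^2}(\bm \eta^\top \bm y)$, where $\bm w := \cW(\bm y) = (I_n - \frac{\bm \eta \bm \eta^\top}{\|\bm \eta\|_2^2})\bm y$ is the orthogonal component, and set $\xi := \bm \eta^\top \bm y$. Under the model \eq{eq:stat-model} the pair $(\xi, \bm w)$ is jointly Gaussian, being a linear image of $\bm y$, and a direct computation gives ${\rm Cov}(\xi, \bm w) = \sigma^2 \bm \eta^\top (I_n - \frac{\bm \eta \bm \eta^\top}{\|\bm \eta\|_2^2}) = \bm 0$, so $\xi$ and $\bm w$ are independent. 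Hence, conditionally on $\cW(\bm y) = \bm w$, the law of $\xi$ is still $N(\bm \eta^\top \bm \mu(\cT), \sigma^2 \|\bm \eta\|_2^2)$, which under $H_0$ is $N(0, \sigma^2 \|\bm \eta\|_2^2)$.

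Next I would parametrize the fibre of $\cW$. Since fixing $\bm w$ while letting $\xi$ vary traces out the line $\{\bm y + \theta \bm \eta : \theta \in \RR\}$ (note $\cW(\bm y + \theta \bm \eta) = \cW(\bm y)$ while $\bm \eta^\top(\bm y + \theta \bm \eta) = \xi + \theta \|\bm \eta\|_2^2$), the event $\{\bm y \in {\rm Pol}(\cK, \cA, \cT)\}$ restricted to this line becomes $\{\theta : \bm y + \theta \bm \eta \in {\rm Pol}(\cK, \cA, \cT)\}$. Writing the polyhedron as $\{\bm y' : A \bm y' \le \bm b\}$, this is $\{\theta : \theta(A\bm \eta) \le \bm b - A\bm y\}$, a finite intersection of half-lines in the scalar $\theta$: rows with $(A\bm \eta)_i > 0$ give upper bounds on $\theta$, rows with $(A\bm \eta)_i < 0$ give lower bounds, and rows with $(A\bm \eta)_i = 0$ give constraints not involving $\theta$. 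Because the observed $\bm y$ lies in the polyhedron, $\theta = 0$ is feasible, so the $\theta$-free constraints hold and the feasible set is a non-empty interval $[\theta_{\rm min}, \theta_{\rm max}]$ with $\theta_{\rm min} \le 0 \le \theta_{\rm max}$, where $\theta_{\rm min}, \theta_{\rm max}$ are precisely the optimal values of the linear programs in \eq{eq:truncation-points} (with $\pm\infty$ allowed). Translating back to $\xi$, the line points in the polyhedron are exactly those with $\xi \in [\bm \eta^\top \bm y + \theta_{\rm min}\|\bm \eta\|_2^2, \bm \eta^\top \bm y + \theta_{\rm max}\|\bm \eta\|_2^2] = [L(\bm w), U(\bm w)]$.

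Finally I would combine the two steps: conditioning the $N(0,\sigma^2\|\bm \eta\|_2^2)$ variable $\xi$ (already independent of $\bm w$) additionally on $\bm y \in {\rm Pol}(\cK, \cA, \cT)$ is, by the previous paragraph, the same as restricting $\xi$ to $[L(\bm w), U(\bm w)]$, so the conditional c.d.f.\ of $\xi$ is $F_{0,\sigma^2\|\bm \eta\|_2^2}^{[L(\bm w), U(\bm w)]}$, as claimed. I expect the main subtlety to be the measure-theoretic bookkeeping: the event $\{\cW(\bm y) = \bm w\}$ is null, so ``conditioning'' must be made precise through the disintegration of the Gaussian law along the $\bm \eta$-direction (equivalently, conditional densities), and one must verify that restricting that one-dimensional conditional law to the polyhedron merely multiplies its density by $\one\{\xi \in [L(\bm w), U(\bm w)]\}$ rather than introducing extra $\bm w$-dependence. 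The independence established in the first step is exactly what makes this clean; the care needed is in handling the cases $\theta_{\rm min} = -\infty$ or $\theta_{\rm max} = +\infty$, where the truncation is one-sided or absent.
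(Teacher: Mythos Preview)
Your proposal is correct and follows exactly the decomposition argument of Lee et al.~\cite{lee2013exact}. Note that the paper itself does not give a proof of this lemma: it simply states the result and writes ``See \cite{lee2013exact} for the proof and more detailed implications of Lemma~\ref{lemm:polyhedral},'' so your argument is precisely what the paper defers to, and there is nothing to compare against.
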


The above lemma tells that
the selective sampling distribution
%in \eq{eq:polyhedral-lemma}
is defined by considering the frequency property of the statistic 
$\bm \eta^\top \bm y$
within the polyhedron
${\rm Pol}(\cK, \cA, \cT)$,
which can be characterized by solving a minimization and a maximization problems over the polyhedron in \eq{eq:truncation-points}. 
Remembering that $\bm y$ is Normally distributed, 
$\bm \eta^\top \bm y$
is also Normally distributed. 
If we restrict our attention only within the polyhedron ${\rm Pol}(\cK, \cA, \cT)$, 
the distribution of $\bm \eta^\top \bm y$ is a truncated Normal distribution 
in which each truncation point corresponds to one of the boundaries of the polyhedron. 
%
%In order to adapt to our problem setup,
%we describe Lemma~\ref{lemm:polyhedral}
%in a different manner 
%from the original one in \cite{lee2013exact}. 
%
See \cite{lee2013exact} for the proof and more detailed implications of Lemma~\ref{lemm:polyhedral}. 

\section{Selective inference for predictive pattern mining}
\label{sec:SI-for-PM}
In this section,
we introduce a selective inference procedure
for the pattern mining problem 
described in the previous section. 
In \S\ref{subsec:pma-lse},
we first present 
the pattern mining method 
for the discovery phase,
and
discuss that 
a selection event
by this algorithm
is characterized 
by a set of linear inequalities in the sample space.
Next, 
in \S\ref{subsec:spc-pm},
we present a novel method 
for the inference phase,
in which we can efficiently handle extremely large number of patterns in the database.

Both methods 
in the two phases 
are developed 
by exploiting anti-monotonicity properties 
defined in the item-set tree structure 
as depicted in
\figurename~\ref{fig:tree}. 
Each node of the tree corresponds to each pattern $t_j$ in $\cJ$,
and 
same index $j \in [J]$
is used for representing a node and the corresponding pattern. 
For each node $j \in [J]$ in the tree, 
we denote the set of its descendant nodes as
$Des(j) := \{\ell \in [J] \mid t_j \subseteq t_{\ell}\}$. 

%\begin{wrapfigure}[10]{r}[32pt]{5cm}
\begin{figure}[h]
 \begin{center}
  \includegraphics[width=0.5\linewidth]{./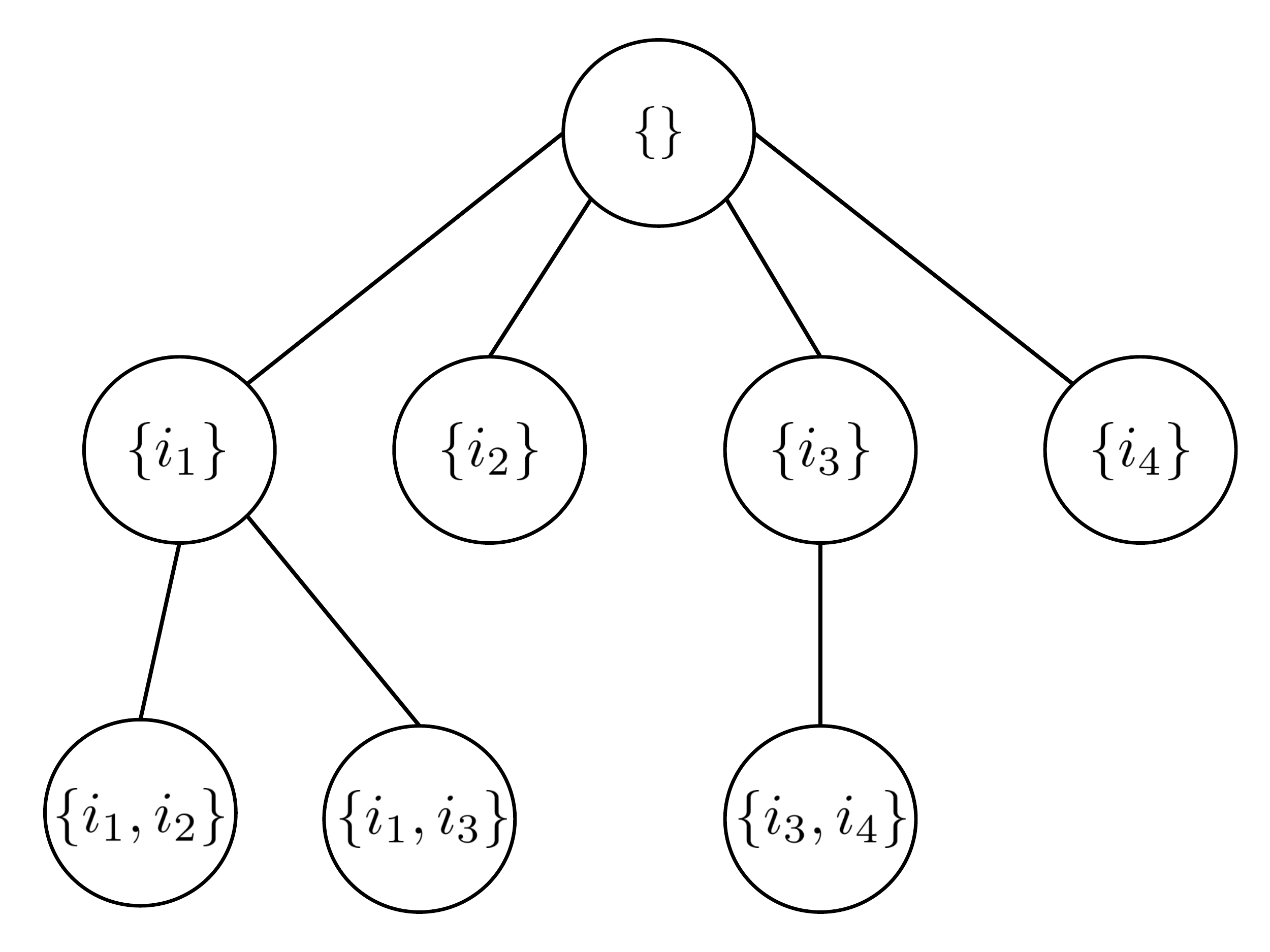}  
  \caption{An illustration of tree structure for item-set mining problems.}
  \label{fig:tree}
 \end{center}
\end{figure}
%\end{wrapfigure}

% ----- sec3.1 -----
\subsection{Pattern mining as a linear selection event}
\label{subsec:pma-lse} 
In order to discover the top $k$ associated patterns, 
we develop a method
searching over the item-set tree as depicted in \figurename~\ref{fig:tree}. 
In the search over the tree,
we use the following pruning criterion.
\begin{lemm}
 \label{lemm:discovery-univariate-anti-monotonicity}
 Consider a node $j$ in the tree structure
 as depicted in \figurename~\ref{fig:tree}
 corresponding to a pattern $j \in [J]$. 
 Then,
 for any descendant node $\ell \in Des(j)$, 
 \begin{align}
  \label{eq:discovery-univariate-condition} 
  s_{\ell} \le \sum_{i: y_i > 0} \tau_{i, j} y_i.
 \end{align}
\end{lemm}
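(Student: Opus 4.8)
The plan is to bound $s_\ell = \bm\tau_\ell^\top \bm y = \sum_{i \in [n]} \tau_{i,\ell} y_i$ by exploiting the anti-monotonicity of pattern occurrence along the tree. The key structural fact is that if $\ell \in Des(j)$, then $t_j \subseteq t_\ell$, so any transaction containing $t_\ell$ must also contain $t_j$; in terms of the occurrence indicators this reads $\tau_{i,\ell} \le \tau_{i,j}$ for every $i \in [n]$ (equivalently, $\tau_{i,\ell} = 1 \Rightarrow \tau_{i,j} = 1$). This is the one nontrivial ingredient, and I would state and justify it first, directly from the definitions in \eqref{eq:occurrence_element} and the definition of $Des(j)$.

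Given that, the argument is a short chain of inequalities. First I would split the sum defining $s_\ell$ over the sign of $y_i$:
\begin{align*}
 s_\ell = \sum_{i : y_i > 0} \tau_{i,\ell} y_i + \sum_{i : y_i \le 0} \tau_{i,\ell} y_i.
\end{align*}
The second sum is a sum of nonpositive terms (each $\tau_{i,\ell} y_i \le 0$), hence $\le 0$, so it can be dropped to obtain $s_\ell \le \sum_{i : y_i > 0} \tau_{i,\ell} y_i$. For the remaining sum, each term has $y_i > 0$ and $0 \le \tau_{i,\ell} \le \tau_{i,j}$, so $\tau_{i,\ell} y_i \le \tau_{i,j} y_i$; summing over $\{i : y_i > 0\}$ gives $\sum_{i : y_i > 0} \tau_{i,\ell} y_i \le \sum_{i : y_i > 0} \tau_{i,j} y_i$, which is exactly the right-hand side of \eqref{eq:discovery-univariate-condition}. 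Chaining the two bounds completes the proof.

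There is no real obstacle here — the only thing to get right is the direction of the monotonicity ($\tau_{i,\ell} \le \tau_{i,j}$, not the reverse), which follows because a larger item-set is contained in fewer transactions. If one wanted to be fully explicit, one could note that the bound is attained in the limiting sense that $\sum_{i : y_i > 0}\tau_{i,j} y_i$ is the largest value $s_\ell$ could take over all hypothetical descendants, which is precisely what makes it a valid pruning bound: if this quantity is already below the current $k$-th best score, no descendant of $j$ can enter the top-$k$ set and the whole subtree $Des(j)$ can be pruned.
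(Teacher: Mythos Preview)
Your proof is correct and follows essentially the same route as the paper: it uses the anti-monotonicity $0 \le \tau_{i,\ell} \le \tau_{i,j}$, splits $s_\ell$ by the sign of $y_i$, drops the nonpositive part, and then bounds the remaining positive part term-by-term. The paper's version is just a terser one-line chain of the same inequalities.
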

\begin{proof}
 Noting that
 $0 \le \tau_{i, \ell} \le \tau_{i, j} \le 1$, 
 \begin{align*}
  s_\ell \!:=\! 
  \bm \tau_{\ell}^\top \bm
  \!=\! \sum_{i : y_i > 0}\! \tau_{i, \ell} y_i \!+\! \sum_{i : y_i < 0}\! \tau_{i, \ell} y_i 
  \le\! \sum_{i : y_i > 0}\! \tau_{i, \ell} y_i
  \le\! \sum_{i : y_i > 0}\! \tau_{i, j} y_i.
 \end{align*}
\end{proof}
We note that 
Lemma~\ref{lemm:discovery-univariate-anti-monotonicity} is not new. 
This simple upper bound has been used in several data mining studies such as \cite{kudo2004application,nakagawaKDD2016submitted}. 
When we search over the tree,
if
the upper bound in \eq{eq:discovery-univariate-condition} 
is smaller than the current $k$-th largest score at a certain node $j$,
then
we can quit searching over its descendant nodes $\ell \in Des(j)$. 

A selection event by the above method can be characterized by a set of linear inequalities in the sample space $\RR^n$.
Noting that a fact that
$k$ patterns
$\{t_j\}_{j \in \cK}$ 
are discovered from
the database indicates that 
their scores
$s_j, j \in \cK$, 
are greater than those of the other non-discovered patterns 
$s_j, j \in [J] \setminus \cK$.
This fact can be simply formulated as
\begin{align}
 \label{eq:linear-event}
 \bm \tau_j^\top \bm y \ge \bm \tau_{j^\prime}^\top \bm y
 ~
 \forall 
 (j, j^\prime)
 \in
 \cK \times \{[J] \setminus \cK\}.
\end{align}
Namely,
a selection event by the above mining method is represented as a polyhedron 
${\rm Pol}(\cK, \cA, \cT)$
defined by
$k(J-k)$
linear inequalities in $\RR^n$. 
It indicates that,
in theory,
we can apply the polyhedral lemma in \S\ref{subsec:polyhedral-lemma}
to this problem. 
In practice,
however,
it is computationally intractable to naively handle all these 
$k(J-k)$
linear inequalities. 
%without relying on any computational tricks. 

% ----- sec3.2 -----
\subsection{Selective $p$-value for pattern mining}
\label{subsec:spc-pm}
The discussion in
\S\ref{subsec:pma-lse}
suggests that
it would be hard to compute selective $p$-values
in the form of
\eq{eq:selective-pvalue-uni}
because the selection event
$\cK = \cA(\cT, \bm y)$
is characterized by extremely large number of patterns in the database.
Our basic idea
for addressing this computational difficulty
is to note that 
most of the patterns in the database
actually do not affect the sampling distribution for the selective inference, 
and a large portion of them can be identified 
by exploiting the anti-monotonicity properties 
in the item-set trees. 

Specifically,
we consider $k$ item-set trees for each of the $k$ discovered patterns. 
Each tree consists of a set of nodes corresponding to each of the non-discovered patterns 
$\{t_{j^\prime}\}_{j^\prime \in [J] \setminus \cK}$.
For a pair
$(j, j^\prime) \in \cK \times \{[J] \setminus \cK\}$, 
the $j^{\prime}$-th node in the $j$-th tree 
corresponds to the linear inequality 
$\bm \tau_j^\top \bm y \ge \bm \tau_{j^\prime}^\top \bm y$ 
in
\eq{eq:linear-event}. 
When we search over these $k$ trees, 
we introduce a novel pruning strategy by deriving a condition such that, 
if the $j^{\prime}$-th node in the $j$-th tree satisfies a certain condition, 
then all the linear inequalities
$\bm \tau_{j}^\top \bm t \ge \bm \tau_{\ell^{\prime}}^\top \bm y$
for
$\ell^\prime \in Des(j^\prime)$ 
can be ignored 
because they are guaranteed to be irrelevant to the sampling distribution
for the selective inference,
where,
with a slight abuse of notation,
$Des(j^\prime) :=
\{\ell^\prime \in \{[J] \setminus  \cK\} \mid t_{j^\prime} \subseteq t_{\ell^\prime}\}$. 
%
%\figurename~\ref{fig:k-trees} depicts the search spced defined by the $k$-trees. 

\begin{prop} \label{prop:search-over-k-trees}
 For solving the optimization problems in \eq{eq:truncation-points}, 
 consider the problem of searching over
 all the nodes in the $k$ trees,
 and use a notation
 $(j, j^\prime) \in \cK \times \{[J] \setminus \cK\}$
 for representing the
 $j^\prime$-th node in the $j$-th tree. 
 Then,
 the solutions of the optimization problems in \eq{eq:truncation-points} are respectively written as
 \begin{subequations} %10:14$B$N<072(B
 \begin{align}
  \label{eq:theta_min}
  \theta_{\rm min}
  &=
  \max_{
   \substack{
    (j, j^\prime) \in \cK \times \{[J] \setminus \cK\},\\
    (\bm \tau_{j^\prime} - \bm \tau_{j})^\top \bm \eta < 0
   }
  }
  \frac{(\bm \tau_{j} - \bm \tau_{j^\prime})^\top \bm y}{(\bm \tau_{j^\prime} - \bm \tau_{j})^\top \bm \eta}, \\
  \label{eq:theta_max}
  \theta_{\rm max}
  &=
  \min_{
   \substack{
    (j, j^\prime) \in \cK \times \{[J] \setminus \cK\},\\
    (\bm \tau_{j^\prime} - \bm \tau_{j})^\top \bm \eta > 0
   }
  }
  \frac{(\bm \tau_{j} - \bm \tau_{j^\prime})^\top \bm y}{(\bm \tau_{j^\prime} - \bm \tau_{j})^\top \bm \eta}.
 \end{align}
 \end{subequations}
\end{prop}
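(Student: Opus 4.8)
The plan is to substitute the line $\bm y+\theta\bm\eta$ directly into the $k(J-k)$ linear inequalities in \eq{eq:linear-event} that define ${\rm Pol}(\cK,\cA,\cT)$, and then solve the resulting one-dimensional feasibility problem in $\theta$. For a pair $(j,j')\in\cK\times\{[J]\setminus\cK\}$, the membership condition $\bm\tau_j^\top(\bm y+\theta\bm\eta)\ge\bm\tau_{j'}^\top(\bm y+\theta\bm\eta)$ rearranges to
\begin{align*}
  \theta\,(\bm\tau_{j'}-\bm\tau_j)^\top\bm\eta \;\le\; (\bm\tau_j-\bm\tau_{j'})^\top\bm y .
\end{align*}
First I would split into three cases according to the sign of the scalar $c_{j,j'}:=(\bm\tau_{j'}-\bm\tau_j)^\top\bm\eta$. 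If $c_{j,j'}>0$, the inequality is equivalent to an \emph{upper} bound $\theta\le (\bm\tau_j-\bm\tau_{j'})^\top\bm y / c_{j,j'}$; if $c_{j,j'}<0$, dividing flips the sense and yields a \emph{lower} bound $\theta\ge (\bm\tau_j-\bm\tau_{j'})^\top\bm y / c_{j,j'}$; if $c_{j,j'}=0$, the constraint reduces to $0\le(\bm\tau_j-\bm\tau_{j'})^\top\bm y$, which does not involve $\theta$.

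The key observation for the degenerate case is that $\bm y\in{\rm Pol}(\cK,\cA,\cT)$: since by assumption the patterns $\{t_j\}_{j\in\cK}$ were actually discovered from the data, we have $(\bm\tau_j-\bm\tau_{j'})^\top\bm y\ge 0$ for every such pair, so all constraints with $c_{j,j'}=0$ hold automatically and may be dropped. The feasible set of $\theta$ is therefore the intersection of finitely many half-lines, and it is non-empty because it contains $\theta=0$ (again because $\bm y$ lies in the polyhedron). Hence this feasible set is a closed interval $[\theta_{\rm min},\theta_{\rm max}]$, possibly unbounded on one or both sides, whose left endpoint is the largest of the lower bounds arising from pairs with $c_{j,j'}<0$ and whose right endpoint is the smallest of the upper bounds arising from pairs with $c_{j,j'}>0$; these are exactly \eq{eq:theta_min} and \eq{eq:theta_max}. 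When no pair satisfies $c_{j,j'}<0$ (resp.\ $c_{j,j'}>0$) the corresponding extremum is taken to be $-\infty$ (resp.\ $+\infty$), i.e.\ the interval is one-sided, which is the standard convention under which the truncation points in \eq{eq:truncation-points} are still well defined.

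There is essentially no hard step in this argument: the statement is a routine rewriting of a polyhedral membership condition restricted to a line, using that the feasible region along $\bm\eta$ is an interval. The only points I would be careful to spell out are (i) why the $c_{j,j'}=0$ inequalities can be discarded and (ii) why the interval is non-empty, both of which follow from the single fact that the observed $\bm y$ already satisfies every defining inequality of ${\rm Pol}(\cK,\cA,\cT)$ because the selection event $\cK=\cA(\cT,\bm y)$ has occurred. I would therefore present the proof as the three-case sign analysis above, note the $\theta=0$ feasibility, and read off the $\max$ and $\min$ expressions, with finiteness of the index set $\cK\times\{[J]\setminus\cK\}$ guaranteeing that the extrema are attained whenever the corresponding index subset is non-empty.
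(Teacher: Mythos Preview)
Your proposal is correct and follows essentially the same approach as the paper: substitute $\bm y+\theta\bm\eta$ into the defining inequalities \eq{eq:linear-event}, split according to the sign of $(\bm\tau_{j'}-\bm\tau_j)^\top\bm\eta$, and read off the largest lower bound and smallest upper bound on $\theta$. If anything, you are slightly more careful than the paper, which does not explicitly treat the degenerate case $c_{j,j'}=0$ or argue non-emptiness of the feasible interval; your handling of both via the observation that $\theta=0$ is feasible is the right way to close those gaps.
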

The proof of Proposition~\ref{prop:search-over-k-trees} is presented in Appendix.
This proposition indicates that the problem of computing the sampling distribution for the selective inference is reduced to the problem of searching over the $k$ trees. 
In the following theorem,
we introduce a novel pruning condition for making the search efficient. 

\begin{theo}
 \label{theo:pruning_cond}
 Consider a situation that we have already searched over some nodes in some trees, and denote them as
 $\cV \subset \cK \times \{[J] \setminus \cK\}$.
 Furthermore,
 let us write the current estimates of $\theta_{\rm min}$ and $\theta_{\rm max}$ as
 $\hat{\theta}_{\rm min}^{\cV}$ and $\hat{\theta}_{\rm max}^{\cV}$ respectively.

 For any node in any tree $(j, j^\prime)$,
 if either of the following conditions 
 \begin{align}
  \label{eq:theo-main-a1}
  \sum_{i : \eta_i < 0} \tau_{i, j^\prime} \eta_i - \bm \tau_{j}^\top \bm \eta \ge 0,
 \end{align}
 or
 \begin{align}
  \label{eq:theo-main-a2}
  \!\!\!\!\! \!\!\!\!\! 
  \bm \tau_j^\top \bm y - \sum_{y_i > 0} \tau_{i, j^\prime} y_i \ge 0
  \text{  and  }
  \frac{
   \bm \tau_j^\top \bm y - \sum_{y_i > 0} \tau_{i, j^\prime} y_i
  }{
   \sum_{i : \eta_i < 0} \tau_{i, j^\prime} \eta_i - \bm \tau_{j}^\top \bm \eta
  }
  \le
  \hat{\theta}_{\rm min}^{\cV}
 \end{align}
 are satisfied,
 then
 its descendant nodes 
 $(j, \ell^\prime)$
 for 
 $\ell^\prime \in Des(j^\prime)$
 do not affect the solution of \eq{eq:theta_min},
 i.e.,
 $\theta_{(j, \ell^\prime)}$ does not satisfy the constraint in \eq{eq:theta_min}
 or 
 $\theta_{(j, \ell^\prime)}$ is smaller than the current estimate $\hat{\theta}_{\rm min}^{\cV}$. 

 Similarly, for any node in any tree $(j, j^\prime)$,
 if either of the following conditions 
 \begin{align*}
  \sum_{i : \eta_i > 0} \tau_{i, j^\prime} \eta_i - \bm \tau_{j}^\top \bm \eta \le 0
 \end{align*}
 or
 \begin{align*}
  \bm \tau_j^\top \bm y - \sum_{y_i > 0} \tau_{i, j^\prime} y_i \ge 0
  \text{  and  }
  \frac{
   \bm \tau_j^\top \bm y - \sum_{y_i > 0} \tau_{i, j^\prime} y_i
  }{
   \sum_{i : \eta_i > 0} \tau_{i, j^\prime} \eta_i - \bm \tau_{j}^\top \bm \eta
  }
  \ge
  \hat{\theta}_{\rm max}^{\cV}
 \end{align*}
 are satisfied,
 then
 its descendant nodes 
 $(j, \ell^\prime)$
 for 
 $\ell^\prime \in Des(j^\prime)$
 do not affect the solution of \eq{eq:theta_max},
 i.e.,
 $\theta_{(j, \ell^\prime)}$ does not satisfy the constraint in \eq{eq:theta_max}
 or 
 $\theta_{(j, \ell^\prime)}$ is greater than the current estimate $\hat{\theta}_{\rm max}^{\cV}$. 
\end{theo}
The proof of Theorem~\ref{theo:pruning_cond} is presented in Appendix.
This theorem provides explicit pruning conditions in the search process over the $k$ trees, 
and enables selective $p$-value computation
by making good use of the anti-monotonicity properties in the trees
for efficiently identifying the patterns
that are not relevant to the sampling distribution. 

The pruning conditions in Theorem~\ref{theo:pruning_cond} do not depend on specific search strategies over the $k$ trees.
In practice, it is more efficient to search both
$\theta_{\rm min}$
and
$\theta_{\rm max}$
simultaneously.
In this case,
we can develop slightly different pruning conditions 
that can be commonly used for the two search problems. 
Due to the space limitation,
we do not describe the specific implementation of our search strategy.

\section{Extensions}
\label{sec:extensions-generalizations}
So far,
we focus on 
a specific class of pattern mining problems
described in
\S\ref{subsec:problem-statement} 
for concreteness. 
In this section, we discuss extensions.

\subsection{Discovering positive and negative associations simultaneously}
\label{subsec:pos-neg}
Previously,
we have studied
the problem of discovering the top $k$ positively associated patterns
(or the top $k$ negatively associated patterns).
It is often desired to discover the top $k$ associated patterns 
regardless of the signs of associations. 
In this case,
it is natural to select the top $k$ patterns
whose absolute scores 
$|s_j|, j \in [J]$
are greater than the others. 
In this situation,
it is appropriate to make inferences conditional not only on the selected patterns but also on their signs. 
To realize this,
we slightly change the definitions of discovery event and selective $p$-values. 
Let us define 
$\tilde{\cK} := \{(j, {\rm sgn}(s_j))\}_{j \in \cK}$,
i.e.,
the set of the discovered patterns and the signs of the associations,
and write the discovery phase as
%\begin{align*}
$\tilde{\cK} = \cA(\cT, \bm y)$. 
%\end{align*}
%
Then,
in the inference phase,
we define selective $p$-values
depending on the signs of the associations
in the following way:
\begin{align*}
% \label{eq:selected-pvalue-for-both-side}
 p_j^{(\tilde{\cK})} := \mycase{
 {\rm Prob}(\bm \tau_j^\top \bm y > s_j \mid \tilde{\cK} = \cA(\cT, \bm y), \cW(\bm y) = \bm w, H_0) ~~ \text{if } {\rm sgn}(s_j) > 0, \\
 {\rm Prob}(\bm \tau_j^\top \bm y < s_j \mid \tilde{\cK} = \cA(\cT, \bm y), \cW(\bm y) = \bm w, H_0) ~~ \text{if } {\rm sgn}(s_j) < 0.
 }
\end{align*}
This definition
%in \eq{eq:selected-pvalue-for-both-side}
is based on the idea that, 
if a pattern is discovered in the first step because of its high positive (resp. negative) association, 
we would be only interested in testing whether the positive (resp. negative) association is statistically significant or not after correcting the selection bias. 
%
%In \cite{lee2013exact},
%the authors also discussed
%the pros and cons of conditioning on the signs
%in selective inference for linear models. 
%
By conditioning
not only on the observed discovered patterns
but also on the observed signs of the associations,
the selection event is characterized by
$2k(J-k)$
linear inequalities: 
%\begin{align*}
 $|\bm \tau_{j}^\top \bm y| \ge |\bm \tau_{j^\prime}^\top \bm y|
 ~\Leftrightarrow~
 \left(
 {\rm sgn}(\bm \tau_{j}^\top \bm y) \bm \tau_{j}
 \pm
 \bm \tau_{j^\prime}
 \right)^\top
 \bm y
 \ge
 \zero$
%\end{align*}
for all
$(j, j^\prime) \in \cK \times [J] \setminus \cK$.

\subsection{Sequential pattern discovery}
\label{subsec:sequential}
If the goal is to discover 
a set of patterns
that are useful 
for predictive modeling, 
it is not appropriate to select patterns
based only on the individual associations with the response. 
In this case, 
we should also consider correlations among the patterns 
because
having multiple highly correlated patterns in predictive models is not very helpful. 
In the context of linear model learning, 
this problem is called feature selection, 
and many feature selection approaches have been studied in the literature (see, e.g., \S3 in \cite{friedman2001elements}). 
Here,
we focus on a sequential pattern discovery approach 
in which
relevant features are sequentially discovered one by one. 
We note that
selective inference framework for sequential feature selection in linear models 
has been already studied in \cite{lee2014exact}.
Our contribution here is
again to extend it
to predictive pattern mining problems 
by overcoming the computational difficulty
in handling extremely large number of patterns
in the database. 

\subsubsection{Discovery phase}
\label{subsubsec:OMP-dis}
Here,
we study a sequential predictive pattern discovery method. 
Let 
$\cK_h := [(1), \ldots, (h)]$
be the sequence of the discovered pattern indices 
from step 1 to step $h$
for $h \in [k]$. 
Before step $h+1$,
we have already discovered $h$ patterns
$\{t_j\}_{j \in \cK_h}$. 
Using these $h$ patterns,
the linear predictive model is written as 
%\begin{align*}
$\sum_{\ell \in [h]} \hat{\beta}_{(\ell)}^{\cK_h} \bm \tau_{(\ell)}$, 
%\end{align*}
where
the coefficients 
$\{ \hat{\beta}_{(\ell)}^{\cK_h} \}_{\ell \in [h]}$
are estimated by least-squares method.
Denoting
$\Gamma^{\cK_h}$
be
$n \times h$
matrix
whose $\ell$-th column is
$\bm \tau_{(\ell)}$,
the least square estimates are written as 
\begin{align*}
 \hat{\bm \beta}^{\cK_h}
 :=
  [\hat{\beta}_{(1)}^{\cK_h}, \ldots, \hat{\beta}_{(h)}^{\cK_h}]^\top
 =
 (\Gamma^{\cK_h})^+ \bm y,
% ((\Gamma^{\cK_h})^\top (\Gamma^{\cK_h}))^{-1} (\Gamma^{\cK_h})^\top \bm y.
\end{align*}
where
$(\Gamma^{\cK_h})^+$
is the pseudo-inverse of
$\Gamma^{\cK_h}$.
Then,
at the $h + 1$ step, 
we consider the association between
the residual vector 
$\bm r_h := \bm y - \Gamma^{\cK_h} \hat{\bm \beta}^{\cK_h}$
%$\bm y - \sum_{\ell \in [h]} \hat{\beta}_{(\ell)}^{\cK_h} \bm \tau_{(\ell)}$
and 
a pattern
$t_j$ for $j \in [J] \setminus \cK_h$,  
and discover the one that maximizes 
$|\bm r_h^\top \bm \tau_j|$
among the patterns 
$\{t_j\}_{j \in [J] \setminus \cK_{h}}$.
Due to the space limitation,
we do not describe the mining algorithm. 
We can develop it 
by using similar techniques as 
Lemma~\ref{lemm:discovery-univariate-anti-monotonicity}.

In the discovery phase,
we thus consider a selection event that $k$ patterns and their signs are sequentially selected as described above. 
Namely,
the selection event is written as
$\tilde{\cK} = \cA(\cT, \bm y)$ 
with 
$\tilde{\cK} := \{((h), {\rm sgn}(\bm r_h^\top \bm \tau_{(h)})) \}_{h \in [k]}$. 
At each step $h \in [k]$,
an event that the feature
$t_{(h)}$
is discovered is written as
 \begin{align}
 \label{eq:OMP-SE} 
 |\bm r_h^\top \bm \tau_{(h)}| \ge |\bm r_h^\top \bm \tau_{(h^\prime)}|
 ~\Leftrightarrow~
 \left(
 {\rm sgn}(\bm r_{h}^\top \bm \tau_{(h)}) \bm \tau_{(h)}^\top P^{\cK_{h}} 
  \pm \bm \tau_{(h^\prime)}^\top P^{\cK_{h^\prime}} 
  % {\rm sgn}(\bm r_{h^\prime}^\top \bm \tau_{(h^\prime)}) \bm \tau_{(h^\prime)}^\top P^{\cK_{h^\prime}} 
 \right)
 \bm y \ge \zero
 \end{align}
for all 
$h^\prime \in [J] \setminus \cK_{h-1} \setminus \{(h)\}$,
where 
$P^{\cK_h} := I_n - (\Gamma^{\cK_h})^+(\Gamma^{\cK_h})^\top$.
%$(\Gamma^{\cK_h})((\Gamma^{\cK_h})^\top(\Gamma^{\cK_h}))^{-1}(\Gamma^{\cK_h})^\top$.
%
By combining all the linear selection events in $k$ steps, 
the entire selection event of the above sequential discovery method can be characterized by 
$2\sum_{h \in [k]} (J - h)$
linear inequalities in $\RR^n$. 
It means that, 
in theory, 
we can also apply polyhedral lemma to this sequential discovery method. 
In practice, 
however, 
it is computationally intractable to handle those extremely large number of linear inequalities. 

\subsubsection{Inference phase}
\label{subsubsec:OMP-inf}
In order to quantify the importance of each of the discovered patterns in the linear model,
we make statistical inference
on each least-square coefficient
$\hat{\beta}_{(j)}^{\cK_{j}} = ((\Gamma^{\cK_h})^{+} \bm e_j)^{\top} \bm y$,
$j \in [k]$,
with 
$\bm e_j$
being a $k$-dimensional vector with 1 at the $j$-th element and 0 otherwise. 
The null hypothesis for the $j$-th coefficient is
\begin{align*}
 H_{0,j}: ((\Gamma^{\cK_h})^{+} \bm e_j)^{\top} \bm y \sim N(0, \sigma^2 \bm e_j^\top ((\Gamma^{\cK_k})^{+})^\top (\Gamma^{\cK_k})^{+}  \bm e_j).
\end{align*}
Consider a polytope
${\rm Pol}(\tilde{\cK}_k, \cA, \cT)$
defined by
$2\sum_{h \in [k]} (J - h)$
linear inequalities 
in the form of 
\eq{eq:OMP-SE}.
Then, 
the sampling distribution for the selective inference is a truncated Normal distribution 
whose truncation points are given by solving
a minimization and a maximization problems
over the polyhedron
${\rm Pol}(\tilde{\cK}_k, \cA, \cT)$.
Using Theorem~\ref{theo:pruning_cond},
we can develop a similar algorithm
%as Algorithm~\ref{alg:xxx}
for efficiently solving these optimization problems. 

\subsection{Mining statistically sound subgraphs}
In this section, we extend the selective inference framework to graph mining problems. 
The goal of graph mining is to extract interesting structures from graph data, and have been demonstrated to be useful for several areas such as biology, chemistry, material science, etc \cite{takigawa2013graph,jiawei2006datamining,lancichinetti2011finding,weill2009development,borgelt2002mining,saigo2009gboost}. 
Here, we use selective inference framework for providing proper statistical significance measures of the extracted subgraphs obtained by graph mining algorithms. 
We use gSpan \cite{yan2002gspan} algorithm for enumerating frequently appeared subgraphs in datasets.
\figurename~\ref{fig:tree_graph} shows an illustration of a tree structure in graph mining problems. 
\begin{figure}[h]
 \begin{center}
  \includegraphics[width=0.5\linewidth]{./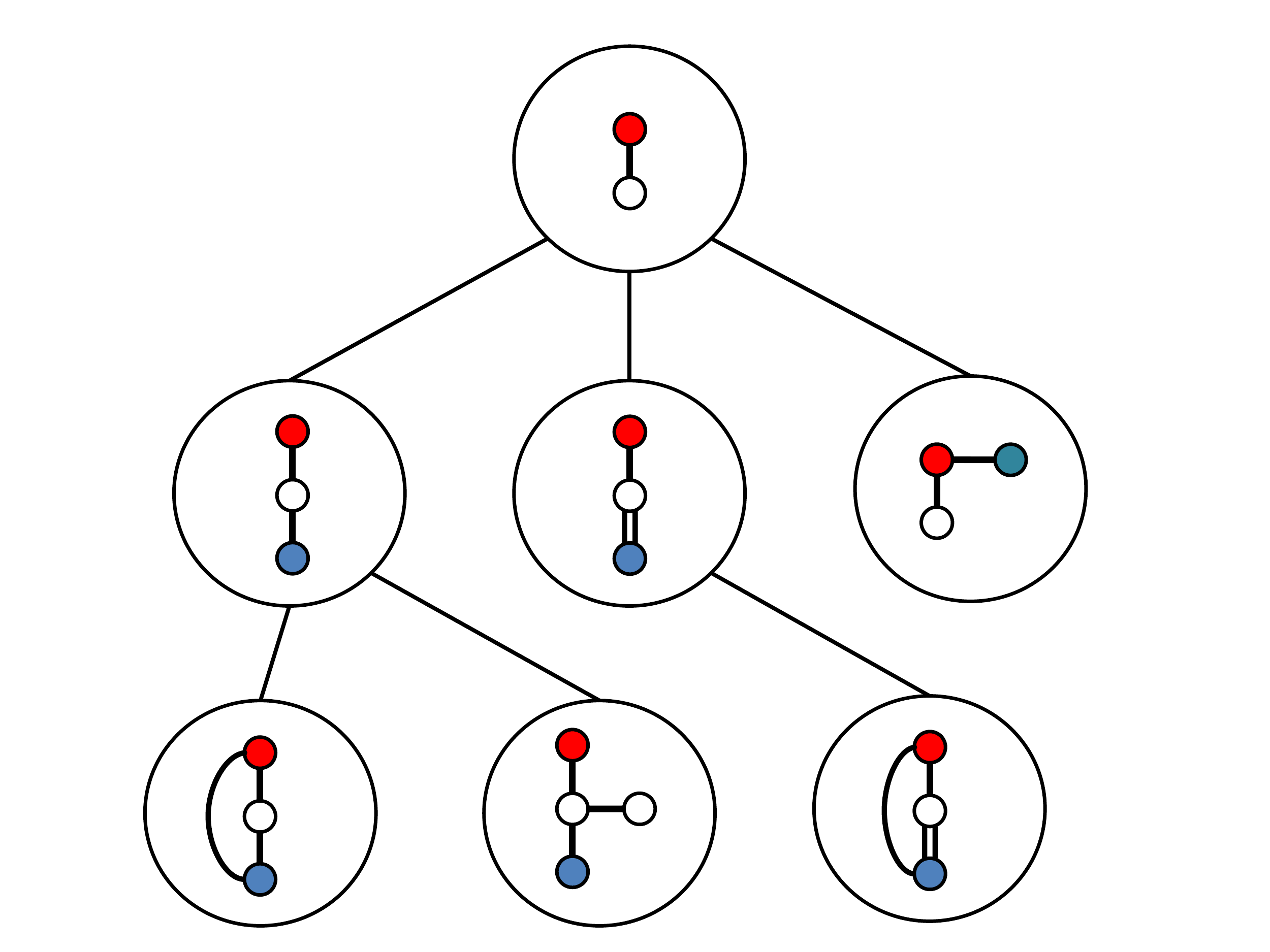}
  \caption{An illustration of a tree structure for graph mining problems. The vertexes are labeled ``red", ``white" or ``blue",
  while the edges are labeled ``single" or ``double" bond.}
  \label{fig:tree_graph}
 \end{center}
\end{figure}

\subsubsection{Problem setup}
We denote the dataset as 
$\{(G_i, y_i)\}_{i \in [n]}$, 
where
$G_i$ is a labeled undirected graph and a response
$y_i$ is defined on $\RR$.
Let $\cJ$
be the set of all possible subgraphs in the database, and denote its size as $J := |\cJ|$.
We denote each of the all subgraphs as $t_1,\cdots,t_J \in \cJ$, and then the occurrence of each pattern is given as the same form
\eq{eq:occurrence_element}.

Note that gSpan is designed for finding subgraphs whose \emph{support} (which is the number of occurrences) is
grater than or equal to minimum support \emph{minsup} and
the maximum number of edges of subgraphs is smaller than or equal to \emph{maxpat}.
In this paper, we only find subgraphs which are highly associated with the response.
To this end, we use the pruning condition \eq{eq:discovery-univariate-condition} during searching subgraphs.
Since the elements of $t_1,\cdots,t_J \in \cJ$ are given in the same form as \eq{eq:occurrence_element},
the problem of searching those subgraphs is inherently the same as the problem of item-set mining discussed in \S\ref{subsec:problem-statement}.
We can apply selective inference to graph mining problems
by using the pruning conditions in
Theorem~\ref{theo:pruning_cond}
by exploiting the anti-monotonicity properties in the tree,
although the number of all subgraphs $J$ is extremely large.

\section{Experiments}
\label{sec:experiments}

\subsection{Experiments on synthetic data (itemset mining)}
\label{subsec:exp-toy}
First,
we compared selective inference ({\tt select}) with naive ({\tt naive}) and data-splitting ({\tt split}) on synthetic data. 
In {\tt naive}, the nominal $p$-values of the $k$ discovered patterns were naively computed without any selection bias correction mechanisms.
In {\tt split}, the data was first divided into two equally sized sets, and one of them was used for pattern discovery, and the other was used for computing $p$-values. 
%
%In {\tt Bonf}, the nominal $p$-values of the discovered patterns were multiplied by the total number of patterns in the database.
%
Note that
the errors
controlled by these methods
%the first three approaches ({\tt select}, {\tt naive}, {\tt split})
are individual false positive rate 
for each of the discovered patterns 
(although {\tt naive} actually cannot control it),
%while the errors 
%controlled by
%{\tt Bonf}
%is the probability of finding one or more false positives. 
%
%Therefore,
%for fairly comparing the first three approaches with {\tt Bonf}, 
we applied Bonferroni correction within the $k$ discovered patterns, 
%to the formers, 
i.e.,
we regard a pattern to be positive 
if the Bonferroni-adjusted selective $p$-values 
(obtained by multiplying selective $p$-values by $k$; see \S\ref{subsubsec:propery-selective-p})
is
still
smaller than
the significance level
$\alpha = 0.05$. 
We only considered the problems 
of finding the top $k$ associated patterns 
regardless of the signs of associations 
(the setup described in \S\ref{subsec:pos-neg}).
%because it is more general than
%the problems of finding only positive (or negative) patterns. 
%
We investigated the results of two scenarios: 
one for finding individual associations 
(indicated as {\tt individual}) 
and
another for finding correlated associations 
by the sequential method 
in \S\ref{subsec:sequential}
(indicated as {\tt sequential}). 

The synthetic data was generated as follows.
In the experiments for comparing false positive rates,
we generated the item-set $T_i$ and the response $y_i$ 
independently at random for each $i \in [n]$. 
The item-set $T_i$ was randomly generated so that it contains
$d (1 - \zeta)$ items on average,
where
$\zeta \in [0, 1]$ 
is an experimental parameter
for representing the sparsity of the data. 
On the other hand,
the response
$y_i$
was randomly generated from a Normal distribution
$N(0, \sigma^2)$. 
In the experiments for comparing true positive rates,
the response $y_i$
was randomly generated from a Normal distribution
$N(\mu(T_i), \sigma^2)$,
where
$\mu(T_i) := 2 \times \one\{\{i_1, i_2, i_3\} \in T_i\}$
in {\tt individual} scenario, 
while
$\mu(T_i) := \frac{1}{2} \times \one\{\{i_1\} \in T_i\} - 2 \times \one\{\{i_2, i_3\} \in T_i\} + 3 \times \one\{\{i_4, i_5, i_6\} \in T_i\}$
in {\tt sequential} scenario.
We investigated the performances by changing various experimental parameters.
We set the baseline parameters as 
$n=100$,
$d=100$, 
$k=5$,
$r=5$,
$\alpha = 0.05$, 
$\sigma=0.5$,
and
$\zeta=0.6$. 

\subsubsection{False positive rates}
\label{subsubsec:exp-toy-FPR}
\figurename~\ref{fig:result-FPR}
shows the false positive rates
%(Bonferroni-adjusted selective $p$-values obtained by multiplying selective $p$-values by $k$; see \S\ref{subsubsec:propery-selective-p})
%(family-wise FPRs (FW-FPRs) within the $k$ discovered patterns as described in the last paragraph in \S\ref{subsec:related-approaches})
when varying the number of transactions
$n \in \{50, 100, \ldots, 250\}$,
the number of items 
$d \in \{50, 100, \ldots, 250\}$.
%and the number of the discovered patterns 
%$k \in \{5, 10, \ldots, 25\}$.
%
In all cases,
the FW-FPRs of {\tt naive} were far greater than the desired significance level $\alpha = 0.05$,
indicating that the selection bias is harmful.
The FW-FPRs of the other
%three
two
approaches {\tt select} and {\tt split}
were successfully controlled.

\begin{figure}[!hb]
  \centering
  \begin{tabular}{cc}
   \includegraphics[width=0.36\linewidth]{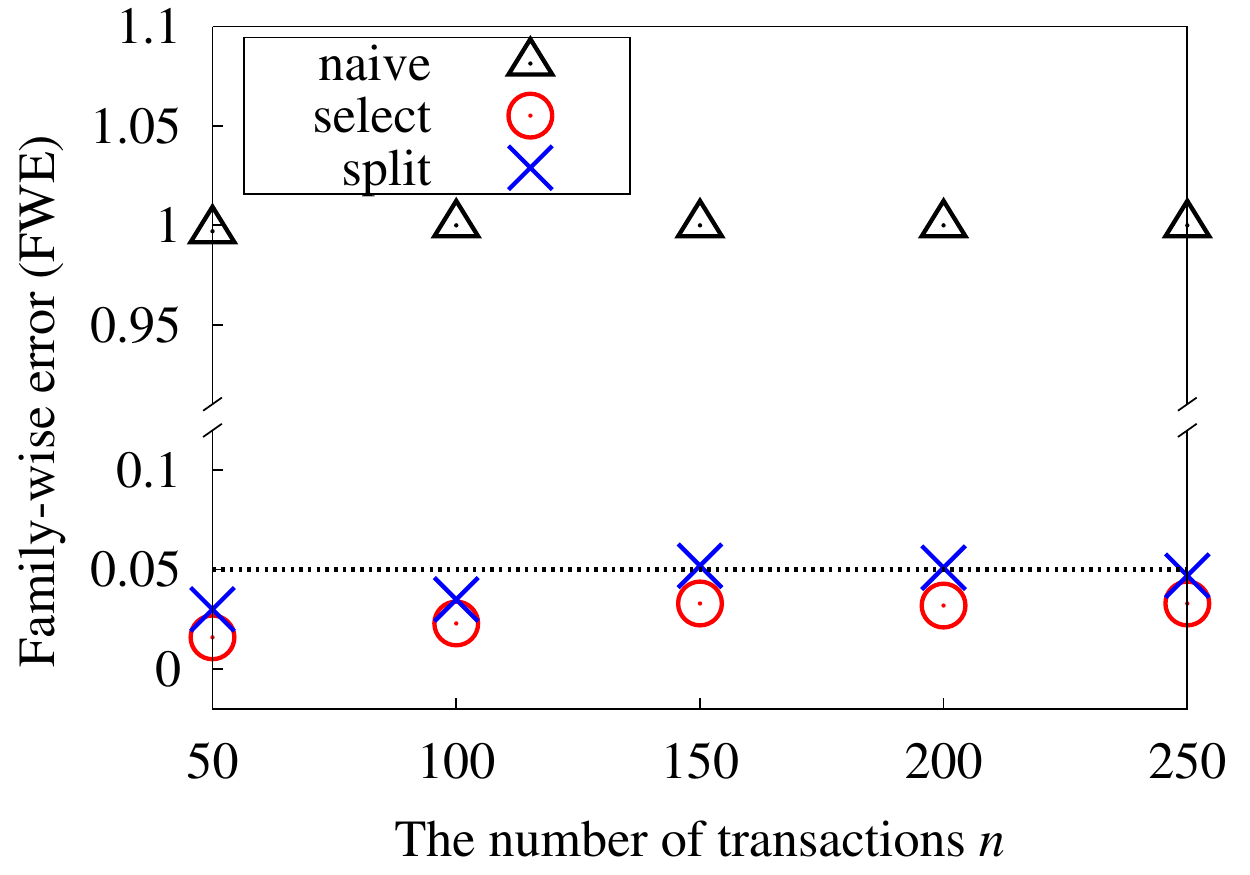} &
   \includegraphics[width=0.36\linewidth]{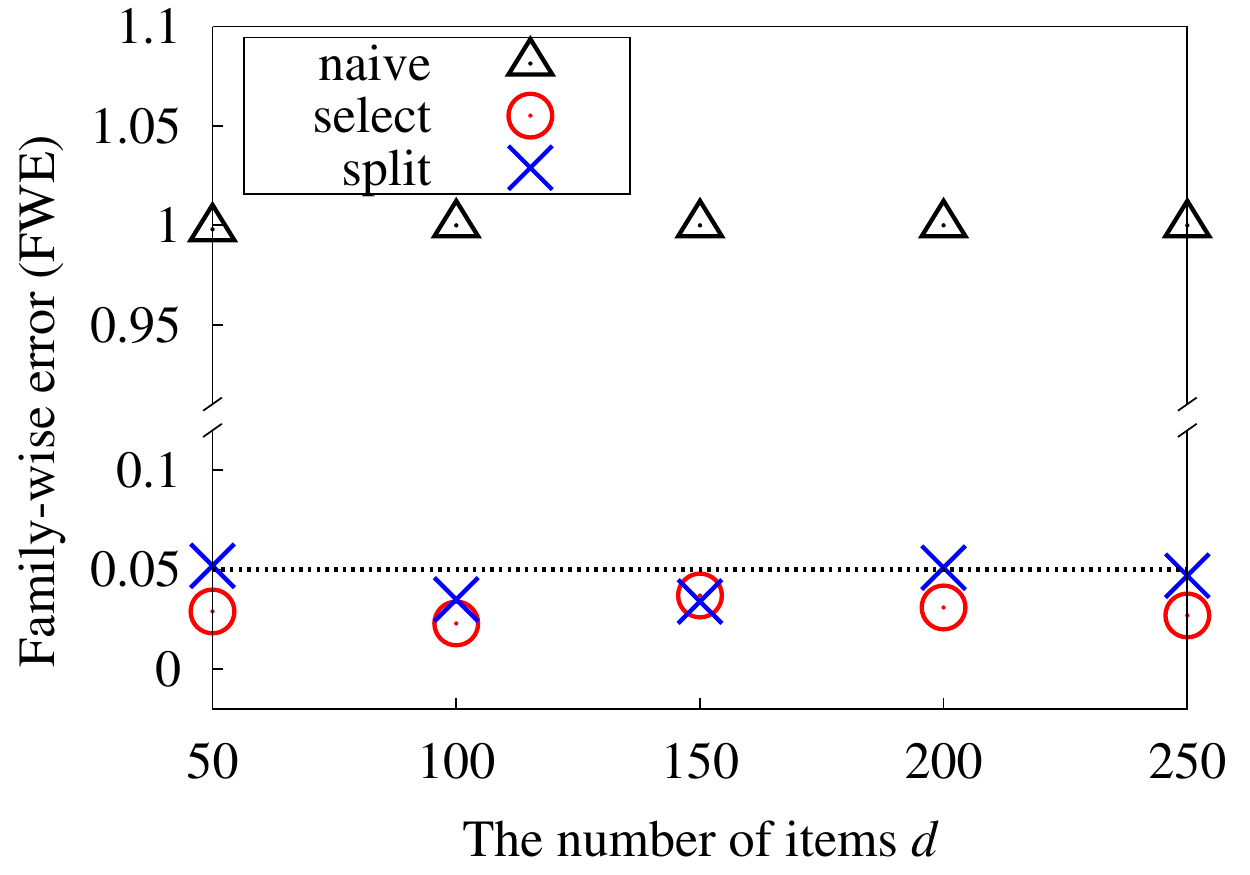} \\
   (a) $n \in \{50,\cdots,250\}$ &
   (b) $d \in \{50,\cdots,250\}$ \\
   \multicolumn{2}{c}{{\tt individual} scenario}
  \end{tabular}
  \begin{tabular}{cc}
   \includegraphics[width=0.36\linewidth]{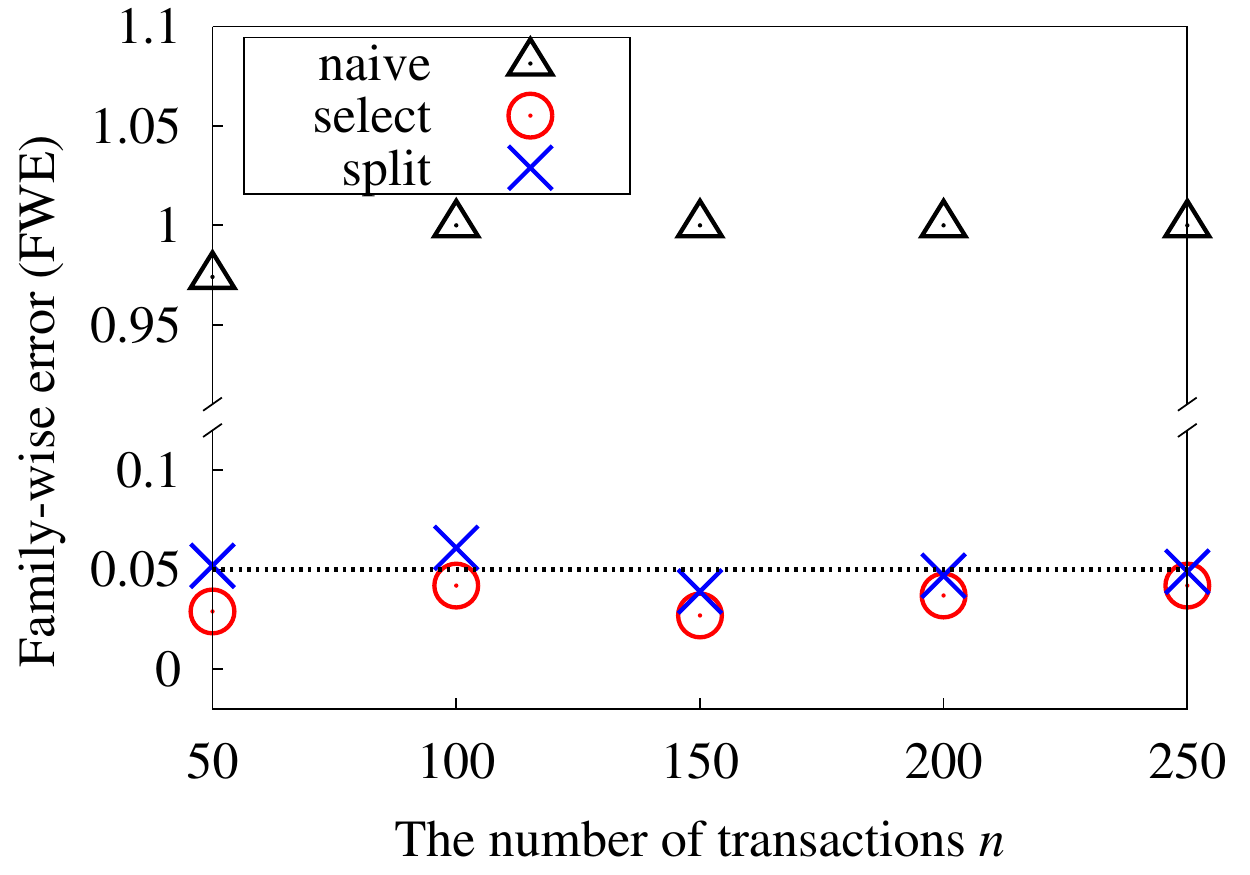} &
   \includegraphics[width=0.36\linewidth]{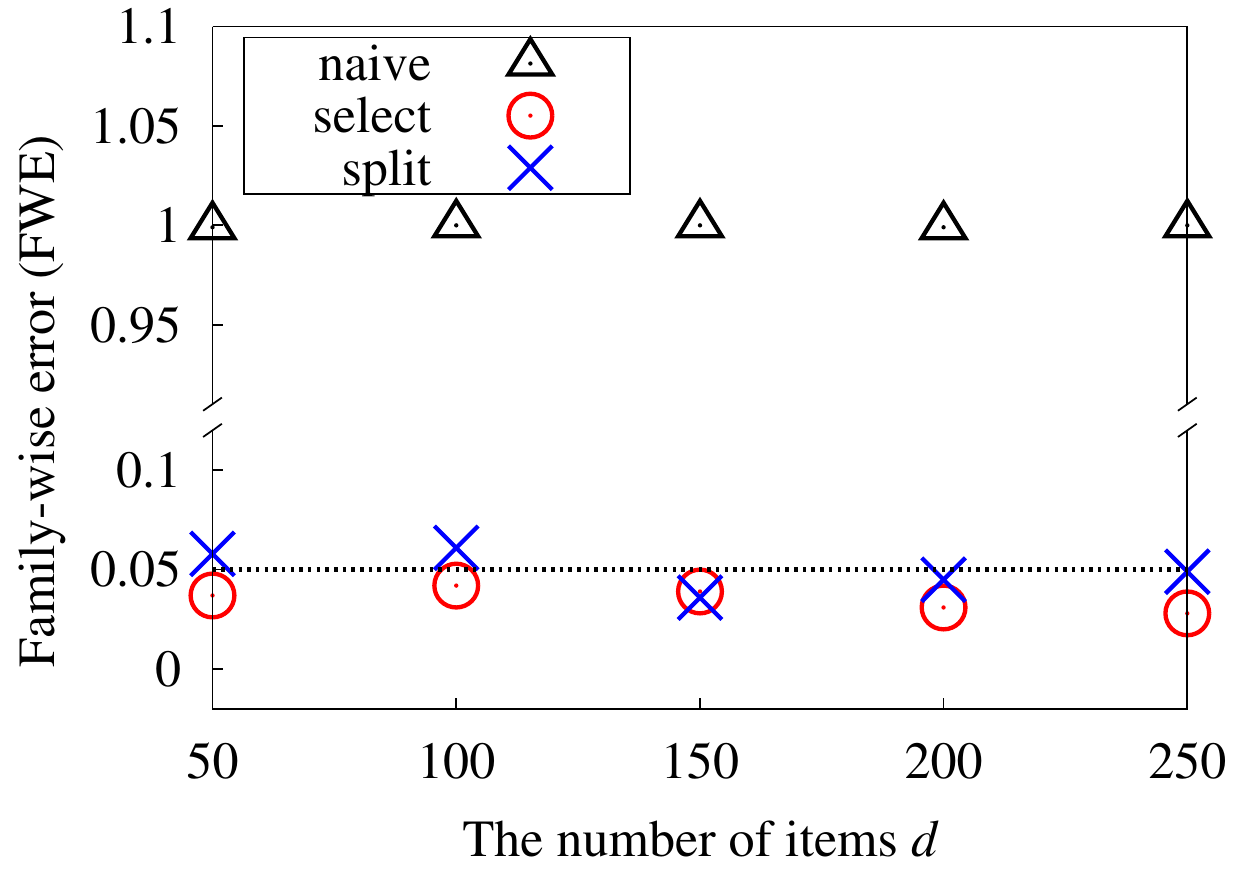} \\
   (c) $n \in \{50,\cdots,250\}$ &
   (d) $d \in \{50,\cdots,250\}$ \\
   \multicolumn{2}{c}{{\tt sequential} scenario}
  \end{tabular}
  \caption{False positive rates.}
  \label{fig:result-FPR}
\end{figure}

\subsubsection{True positive rates}
\label{subsubsec:exp-toy-TPR}
\figurename~\ref{fig:result-TPR}
shows the true positive rates (TPRs)
of {\tt select} and {\tt split}
(we omit {\tt naive} because it cannot control FPRs).
Here,
TPRs
are defined as the probability of discovering truly associated item-sets. 
%($\{i_1, i_2\}$
%in {\tt individual} scenario, 
%and
%$\{i_1, i_2\}$ and $\{i_3, i_4, i_5\}$
%in {\tt sequential} scenario). 
%
In all experimental setups, 
the TPRs of {\tt select}
were much greater than {\tt split}. 
Note that 
the performances of {\tt split} would be worse than {\tt select} 
both in the discovery and the inference phases. 
The risk of failing to discover truly associated patterns
in {\tt split} 
would be higher
than {\tt select}
because only half of the data would be used in the discovery phase.
Similarly, 
the statistical power of the inference in {\tt split} would be smaller
than {\tt select}
because the sample size is smaller. 
%
%Regarding the performances of {\tt Bonf},
%we observe that 
%the TPRs were almost zero 
%in all cases. 
%
%This is because Bonferroni correction is highly conservative 
%especially
%when selections are made from large number of candidates 
%as in pattern mining problems. 

\begin{figure}[!ht]
  \centering
 \vspace*{-1mm}
 \begin{tabular}{cc}
   \includegraphics[width=0.39\linewidth]{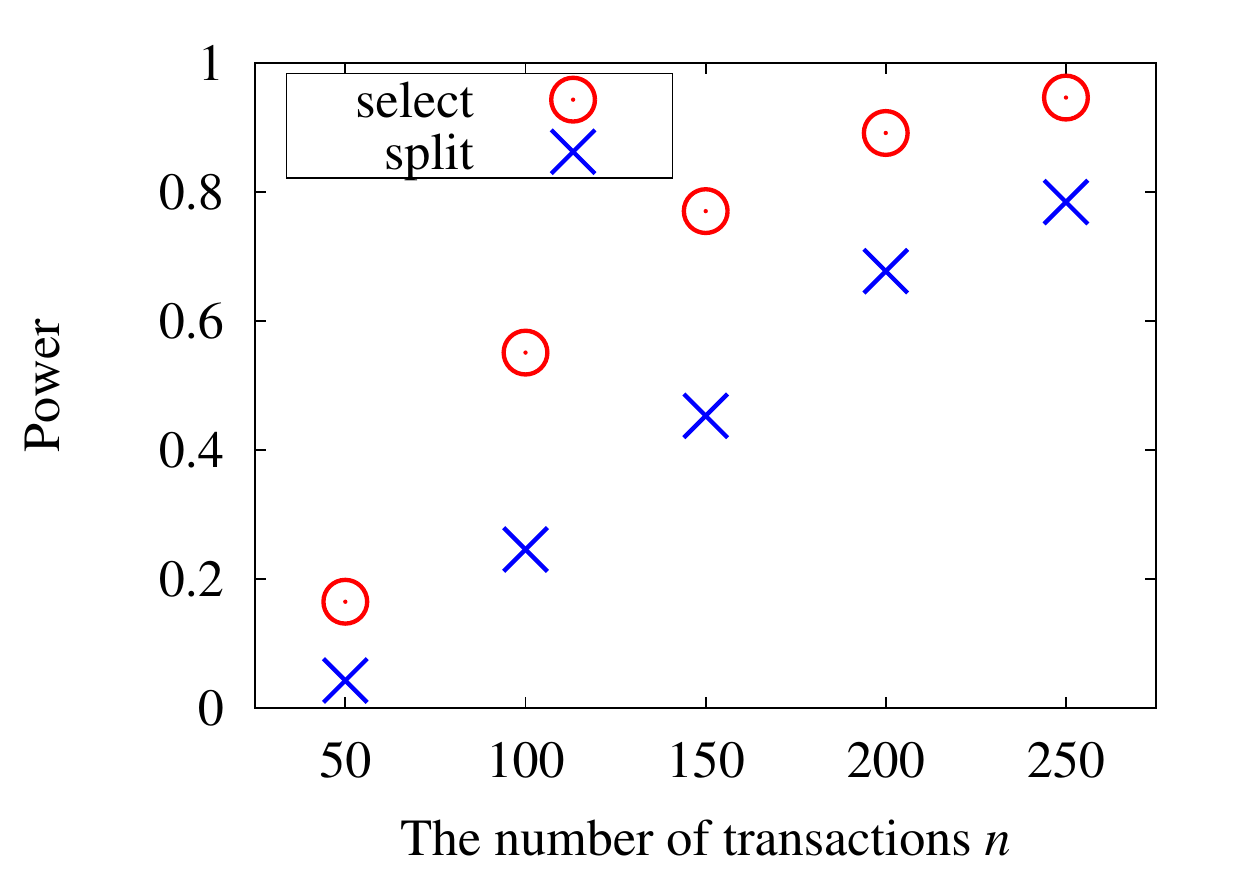} &
   \includegraphics[width=0.39\linewidth]{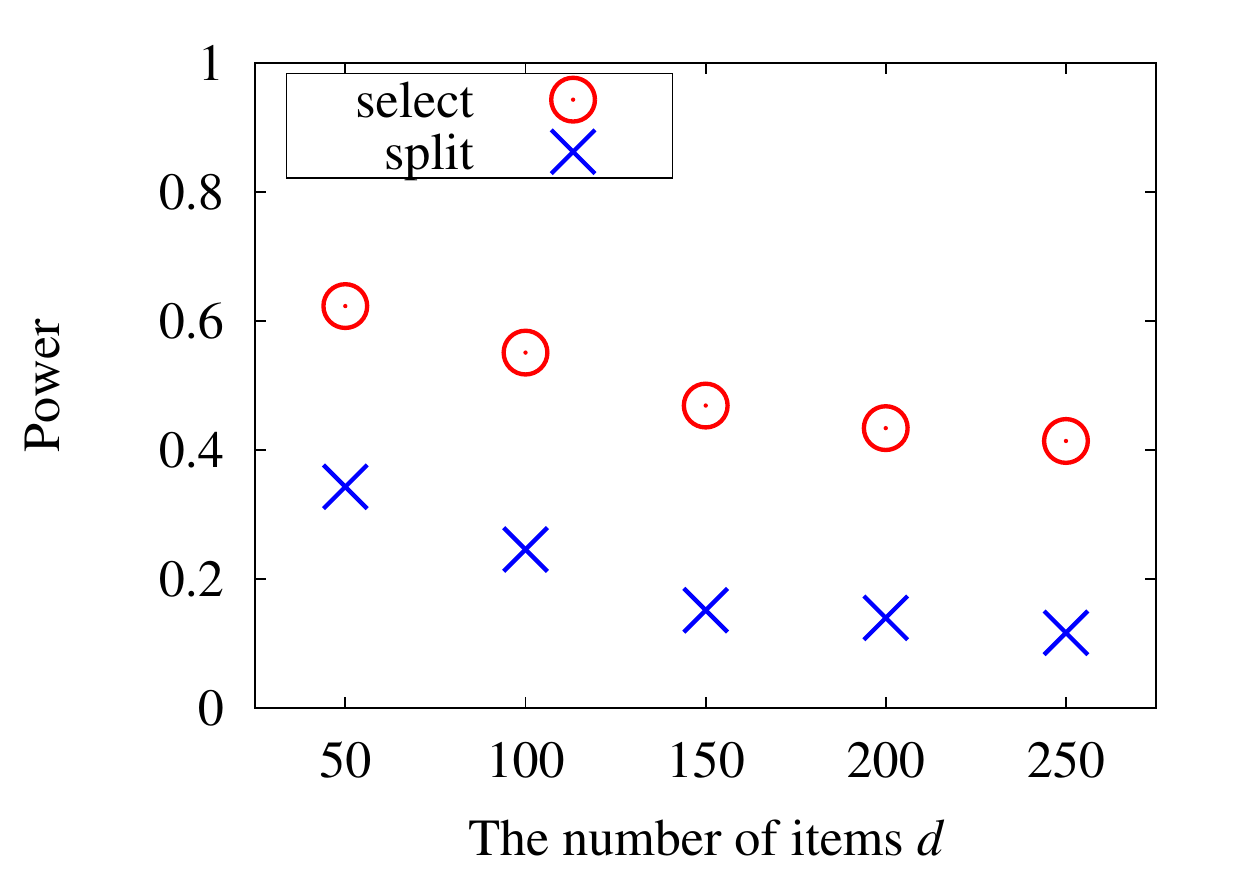} \\
   (a) $n \in \{50,\cdots,250\}$ &
   (b) $d \in \{50,\cdots,250\}$ \\
   \multicolumn{2}{c}{{\tt individual} scenario}
 \end{tabular}
  \begin{tabular}{cc}
   \includegraphics[width=0.39\linewidth]{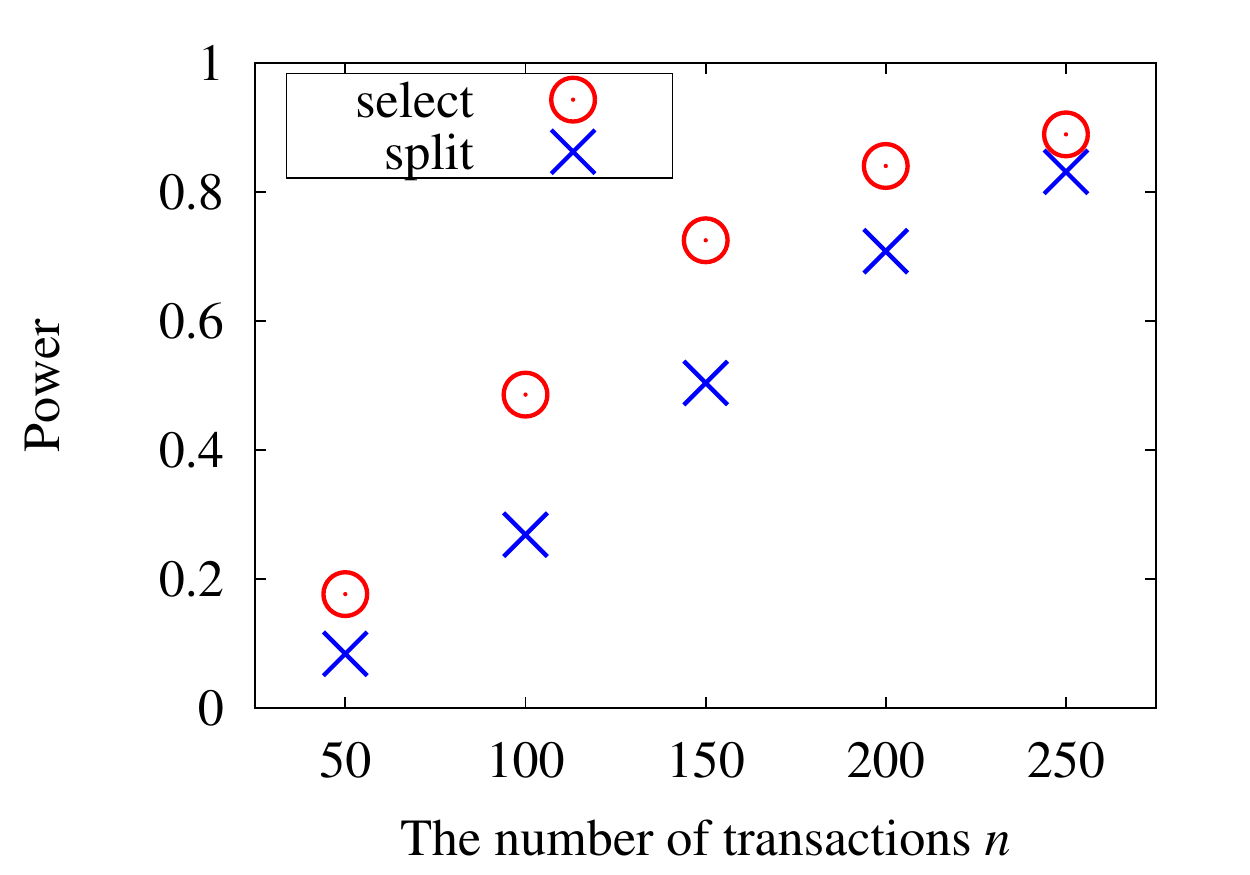} &
   \includegraphics[width=0.39\linewidth]{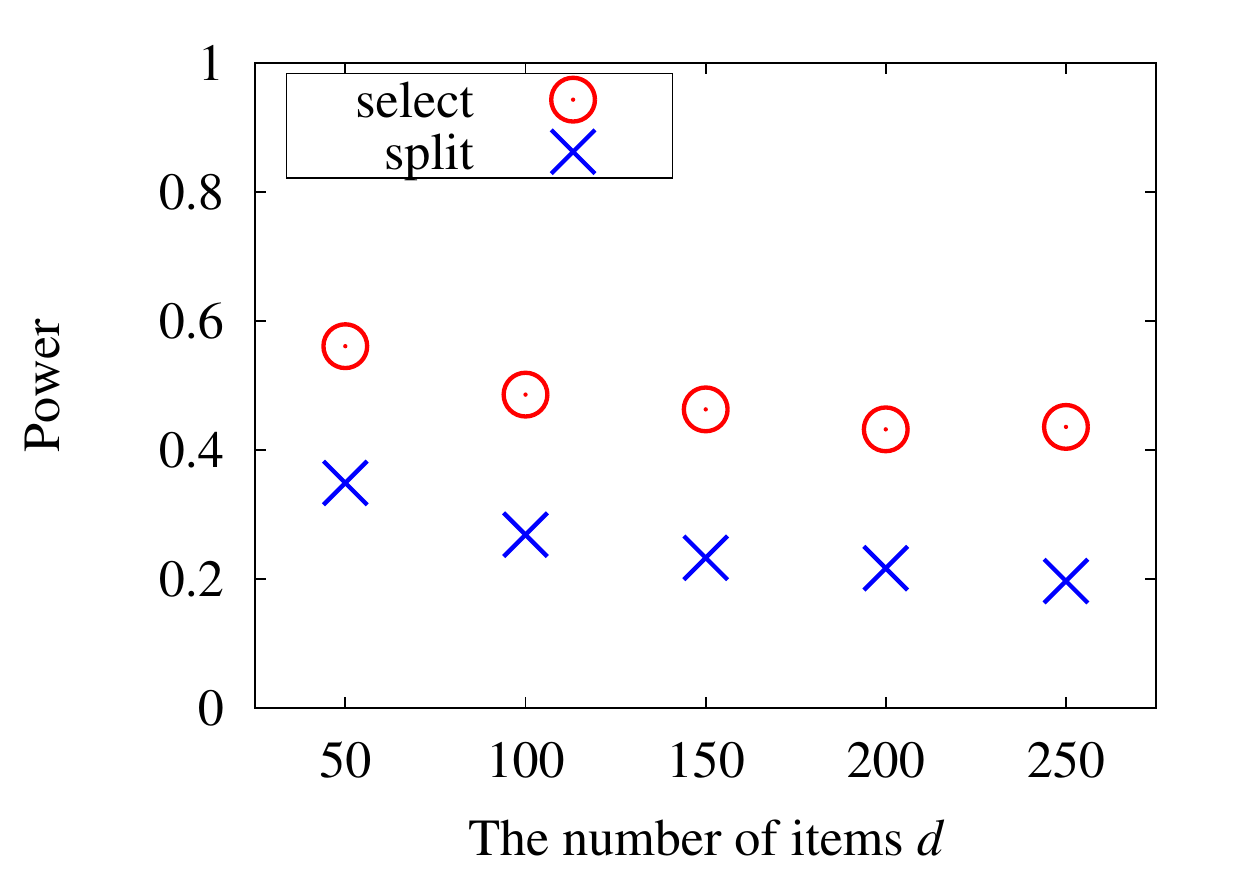} \\
   (c) $n \in \{50,\cdots,250\}$ &
   (d) $d \in \{50,\cdots,250\}$ \\
   \multicolumn{2}{c}{{\tt sequential} scenario}
  \end{tabular}
  \caption{True positive rates.}
  \label{fig:result-TPR}
\end{figure}

\subsubsection{Computational efficiency}
\label{subsubsec:exp-toy-efficiency}
Table~\ref{tab:result_time}
shows the computation times in seconds for the selective inference approach 
with and without 
the computational tricks described in
\S\ref{sec:SI-for-PM} 
for various values of 
the number of transactions
$n \in \{100, \ldots, 10000\}$,
the number of items 
$d \in \{100, \ldots, 10000\}$,
and
the sparsity rates 
$\zeta \in \{0.8, 0.9\}$
(we terminated the search if the time exceeds 1 day). 
It can be observed from the table that,
if we use the computational trick,
the selective inferences can be conducted with reasonable computational costs
except for $d \ge 5000$ and $\zeta = 0.8$ cases in {\tt sequential} scenario.
When the computational trick was not used, 
the cost was extremely large.
Especially when the number of items $d$ is larger than 100,
we could not complete the search within 1 day. 
From the results, 
we conclude that
computational trick described in \S\ref{sec:SI-for-PM} is indispensable 
for selective inferences in pattern mining problems. 
\begin{table}[!ht]
 \centering
 \scriptsize
 \caption{Computation times [sec]}
 \label{tab:result_time}
 \begin{tabular}{cc}
 {\tt individual} scenario & {\tt sequential} scenario \\
 \begin{tabular}{r||l|l||l|l} \hline
 & \multicolumn{2}{c||}{\tiny with computational trick} & \multicolumn{2}{c}{\tiny without computational trick} \\ \cline{2-5}
$n$	&	\multicolumn{1}{c|}{$\zeta=0.8$}	&	\multicolumn{1}{c||}{$\zeta=0.9$}	&	\multicolumn{1}{c|}{$\zeta=0.8$}	&	\multicolumn{1}{c}{$\zeta=0.9$} \\ \hline
100	&	$4.68\times10^{-2}$	&	$1.80\times10^{-2}$	&	$1.37\times10^{2}$	&	$1.31\times10^{2}$	\\
500	&	$1.74\times10^{-1}$	&	$9.07\times10^{-2}$	&	$1.80\times10^{2}$	&	$1.36\times10^{2}$	\\
1000	&	$3.38\times10^{-1}$	&	$1.54\times10^{-1}$	&	$2.65\times10^{2}$	&	$1.41\times10^{2}$	\\
5000	&	$2.33\times10^{0}$	&	$6.61\times10^{-1}$	&	$1.05\times10^{3}$	&	$2.57\times10^{2}$	\\
10000	&	$5.04\times10^{0}$	&	$1.55\times10^{0}$	&	$2.06\times10^{3}$	&	$5.12\times10^{2}$	\\ \hline
 \end{tabular}
 &
 \begin{tabular}{l|l||l|l} \hline
 \multicolumn{2}{c||}{\tiny with computational trick} & \multicolumn{2}{c}{\tiny without computational trick} \\ \hline
\multicolumn{1}{c|}{$\zeta=0.8$}	&	\multicolumn{1}{c||}{$\zeta=0.9$}	&	\multicolumn{1}{c|}{$\zeta=0.8$}	&	\multicolumn{1}{c}{$\zeta=0.9$} \\ \hline
$2.33\times10^{-1}$	&	$5.85\times10^{-2}$	&	$8.83\times10^{2}$	&	$8.28\times10^{2}$	\\
$1.01\times10^{0}$	&	$3.74\times10^{-1}$	&	$1.33\times10^{3}$	&	$8.60\times10^{2}$	\\
$3.18\times10^{0}$	&	$7.27\times10^{-1}$	&	$2.15\times10^{3}$	&	$9.07\times10^{2}$	\\
$6.20\times10^{1}$	&	$3.48\times10^{0}$	&	$1.00\times10^{4}$	&	$2.05\times10^{3}$	\\
$1.24\times10^{2}$	&	$9.00\times10^{0}$	&	$1.98\times10^{4}$	&	$4.63\times10^{3}$	\\ \hline
 \end{tabular}
 \\
 \begin{tabular}{r||l|l||l|l} \hline
$d$	&	\multicolumn{1}{c|}{$\zeta=0.8$}	&	\multicolumn{1}{c||}{$\zeta=0.9$}	&	\multicolumn{1}{c|}{$\zeta=0.8$}	&	\multicolumn{1}{c}{$\zeta=0.9$} \\ \hline
100	&	$4.40\times10^{-2}$	&	$1.77\times10^{-2}$	&	$1.47\times10^{2}$	&	$1.31\times10^{2}$	\\
500	&	$5.06\times10^{-1}$	&	$1.64\times10^{-1}$	&	$\ge$ 1 day	&	$\ge$ 1 day	\\
1000	&	$1.23\times10^{0}$	&	$3.74\times10^{-1}$	&	$\ge$ 1 day	&	$\ge$ 1 day	\\
5000	&	$1.53\times10^{1}$	&	$2.88\times10^{0}$	&	$\ge$ 1 day	&	$\ge$ 1 day	\\
10000	&	$3.70\times10^{1}$	&	$6.16\times10^{0}$	&	$\ge$ 1 day	&	$\ge$ 1 day	\\ \hline
 \end{tabular}
 &
 \begin{tabular}{l|l||l|l} \hline
 \multicolumn{1}{c|}{$\zeta=0.8$}	&	\multicolumn{1}{c||}{$\zeta=0.9$}	&	\multicolumn{1}{c|}{$\zeta=0.8$}	&	\multicolumn{1}{c}{$\zeta=0.9$} \\ \hline
$2.41\times10^{-1}$	&	$6.02\times10^{-2}$	&	$8.86\times10^{2}$	&	$8.20\times10^{2}$	\\
$3.52\times10^{1}$	&	$9.83\times10^{0}$	&	$\ge$ 1 day	&	$\ge$ 1 day	\\
$3.01\times10^{2}$	&	$1.66\times10^{2}$	&	$\ge$ 1 day	&	$\ge$ 1 day	\\
$\ge$ 1 day	&	$1.92\times10^{3}$	&	$\ge$ 1 day	&	$\ge$ 1 day	\\
$\ge$ 1 day	&	$5.98\times10^{4}$	&	$\ge$ 1 day	&	$\ge$ 1 day	\\ \hline
 \end{tabular}
 \end{tabular}
\end{table}

\newpage
\subsection{Application to HIV drug resistance data (itemset mining)}
\label{subsec:exp-itemset}
We applied the selective inference approach to HIV-1 sequence data obtained from Stanford HIV Drug Resistance Database \cite{rhee2003human}. 
The goal here is to find statistically significant high-order interactions of multiple mutations (up to $r=5$ order interactions) that are highly associated with drug resistances.
Same datasets were also studied in \cite{saigo2007mining}. 
We discovered $k=30$ patterns, and evaluated the statistical significances of these patterns by selective inference. 
Table~\ref{tab:HIV-result}
shows the numbers of 1st, 2nd, 3rd and 4th order interactions
that were statistically significant in the sense that the Bonferroni adjusted selective $p$-values is smaller than $\alpha = 0.05$ 
(there were no statistically significant 5th order interactions).
\figurename~\ref{fig:HIV-pval-result}
shows
the list of Bonferroni-adjusted selective $p$-values in increasing order 
on {\tt idv} and {\tt d4t} datasets 
in {\tt individual} and {\tt sequential} scenario, respectively. 
These results indicate that
selective inference approach could successfully identify statistically significant
high-order interactions of multiple mutations. 

\begin{table}[!ht]
 \centering
 \scriptsize
 %\caption{The numbers of significant high-order interactions of multiple mutations in HIV datasets (declared to be significant if Bonferroni-adjusted selective $p$-value $< 0.05$)}
 \caption{The numbers of significant high-order interactions of multiple mutations in HIV datasets.}
 \label{tab:HIV-result}
 \begin{tabular}{r|c|c|c|c|c||c|c|c|c|c} \hline
 & \multicolumn{5}{c||}{{\tt individual} scenario} & \multicolumn{5}{c}{{\tt sequential} scenario} \\ \cline{2-11}
Data	&	\!\!$1^{\rm st}$\!\!	&	\!\!$2^{\rm nd}$\!\!	&	\!\!$3^{\rm rd}$\!\!	&	\!\!$4^{\rm th}$\!\!	&	\!Time[s]\!	&	\!\!$1^{\rm st}$\!\!	&	\!\!$2^{\rm nd}$\!\!	&	\!\!$3^{\rm rd}$\!\!	&	\!\!$4^{\rm th}$\!\!	&	\!Time[s]\!	\\ \hline
\multicolumn{11}{c}{NNRTI ($d=371$)} \\ \hline %&		&		&		&		&		&		&		&		&		&		\\
dlv($n=732$)	&	1	&		&		&		&	.495	&	2	&		&		&		&	18.0	\\
efv($n=734$)	&		&		&		&		&	.732	&	5	&		&		&		&	13.7	\\
nvp($n=746$)	&	4	&	1	&		&		&	.774	&	8	&		&		&		&	17.4	\\ \hline
\multicolumn{11}{c}{NRTI ($d=348$)} \\ \hline %	&		&		&		&		&		&		&		&		&		&		\\
3tc($n=633$)	&	1	&	2	&		&		&	.257	&	4	&		&		&		&	15.1	\\
abc($n=628$)	&	5	&	13	&	7	&	2	&	.238	&	9	&		&		&		&	11.7	\\
azt($n=630$)	&	2	&	5	&	3	&	1	&	.231	&	5	&		&		&		&	17.5	\\
d4t($n=630$)	&	4	&	11	&	6	&	1	&	.215	&	7	&	1	&	3	&		&	13.7	\\
ddi($n=632$)	&	2	&	1	&		&		&	.234	&	6	&		&		&		&	12.1	\\
tdf($n=353$)	&		&		&		&		&	.230	&	3	&	1	&		&		&	26.4	\\ \hline
\multicolumn{11}{c}{PI ($d=225$)} \\ \hline %	&		&		&		&		&		&		&		&		&		&		\\
apv($n=768$)	&	3	&	6	&	1	&		&	.188	&	9	&		&		&		&	~6.5	\\
atv($n=329$)	&	1	&	3	&	2	&		&	.150	&	3	&	1	&		&		&	~5.0	\\
idv($n=827$)	&	1	&	6	&	3	&		&	.437	&	9	&		&		&		&	~6.2	\\
lpv($n=517$)	&	4	&	4	&	1	&		&	.275	&	11	&		&		&		&	~6.1	\\
nfv($n=844$)	&	5	&	7	&	1	&		&	.455	&	15	&		&		&		&	~5.8	\\
rtv($n=795$)	&	5	&	7	&	2	&		&	.183	&	10	&	1	&		&		&	~5.6	\\
sqv($n=826$)	&	1	&	3	&	2	&		&	.623	&	7	&	1	&		&		&	~7.8	\\ \hline
 \end{tabular}
\end{table}

\begin{figure}[!ht]
 \begin{center}
  \includegraphics[width=0.65\linewidth]{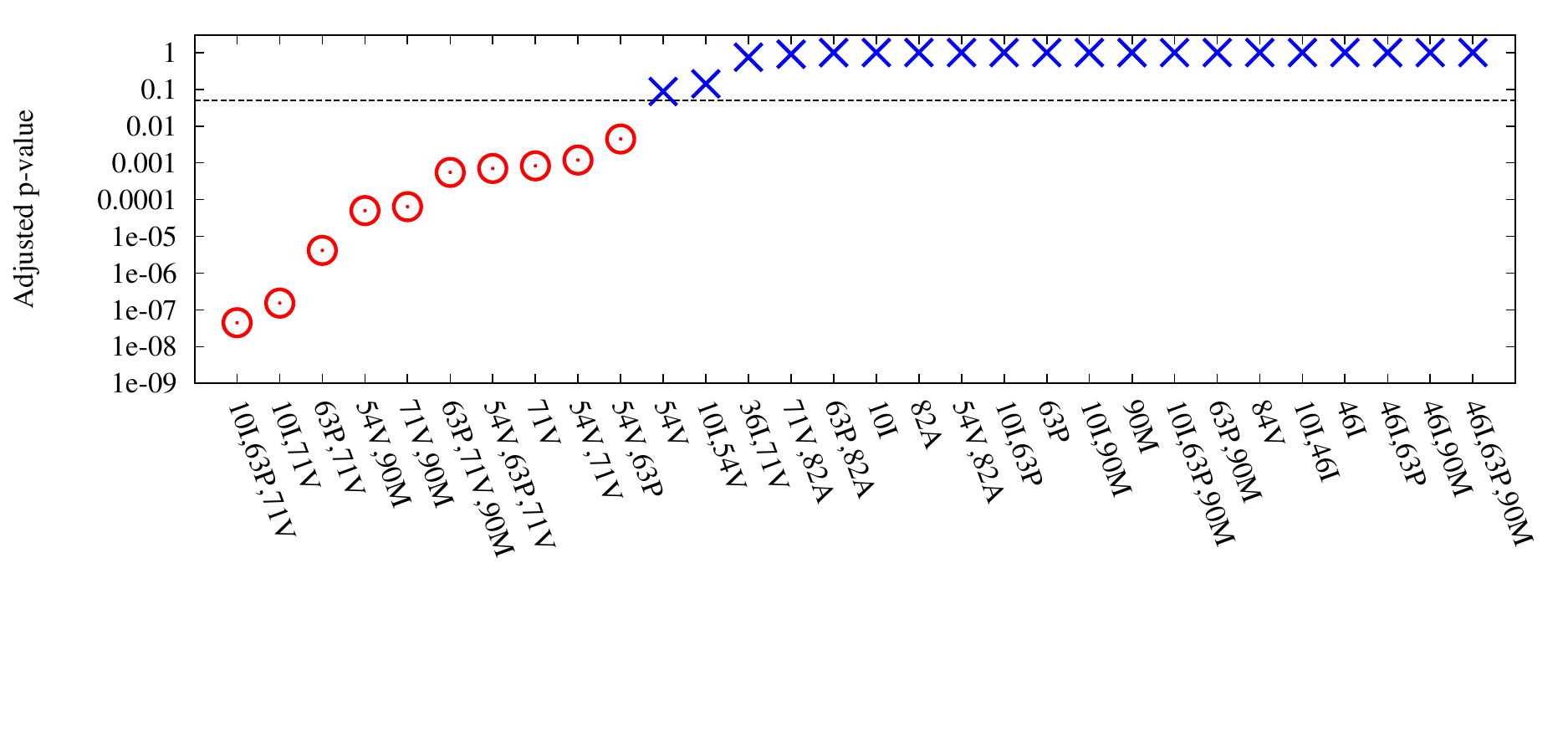}\\
  \vspace*{-7.5mm}
  (a) {\tt  {\tt idv} dataset ({\tt individual} scenario)}\\
  \includegraphics[width=0.65\linewidth]{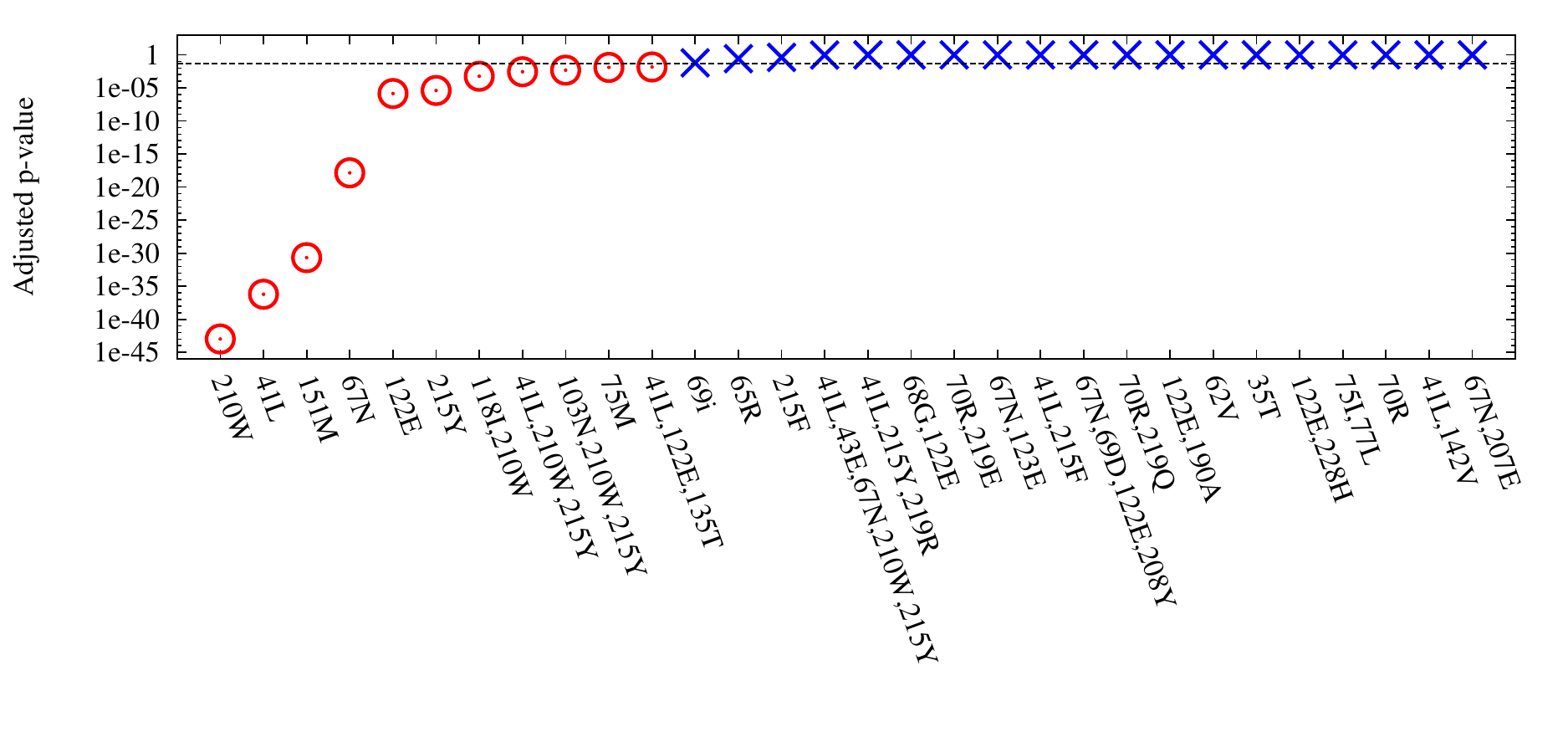}\\
  \vspace*{-2.5mm}
  (b) {\tt  {\tt d4t} dataset ({\tt sequential} scenario)}
 \end{center}
  \caption{
 The list of Bonferroni-adjusted selective $p$-values
 of $k=30$ discovered high-order interactions of multiple mutations 
 on two HIV datasets.
 }
  \label{fig:HIV-pval-result}
\end{figure}

\newpage
\subsection{Experiments on graph mining with chemical data}
\label{subsec:exp-graph}
Here we used {\tt Karthikeyan} dataset where the response is the
melting point of each of the $n = 4173$ chemical compounds
(this data is available at \url{http://cheminformatics.org/datasets/}).
We considered the case with ${\tt maxpat} = \infty$ which indicates the maximum number of edges of subgraphs we wanted to find.
We discovered $k$ = 50 subgraphs which are individually associated with the melting point, 
and evaluated the statistical significances of
those subgraphs by selective inference.
Table~\ref{tab:Graph-result}
shows the numbers of subgraphs
that were statistically significant in the sense that the Bonferroni adjusted selective $p$-values are smaller than $\alpha = 0.05$,
where the identified subgraphs contain up to 7 edges
(there were no statistically significant subgraphs that have more than 7 edges).
\figurename~\ref{fig:Graph-pval-result}
shows
the list of 20 subgraphs and Bonferroni-adjusted selective $p$-values in increasing order.
These results indicate that
selective inference approach could identify statistically significant subgraphs at reasonable computational costs. 

\begin{table}[!ht]
 \centering
 \caption{The numbers of significant subgraphs in {\tt Karthikeyan} dataset.}
 \label{tab:Graph-result}
 \begin{tabular}{c|c|c|c|c|c|c|c} \hline
 $1^{\rm st}$	&	$2^{\rm nd}$	&	$3^{\rm rd}$	&	$4^{\rm th}$	&
 $5^{\rm th}$	&	$6^{\rm th}$	&	$7^{\rm th}$	&	Time[s] \\ \hline
 3 &	5 &	7 &	7 &	8 &	6 &	1 & 5.4 \\ \hline
 \end{tabular}
\end{table}

\begin{figure}[!ht]
 \centering
 \begin{tabular}{|c|c|c|c|} \hline
  \includegraphics[width=0.145\linewidth]{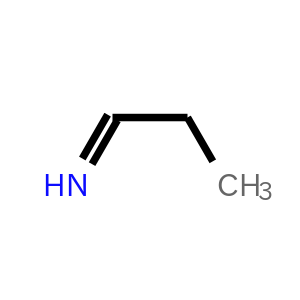} &
  \includegraphics[width=0.145\linewidth]{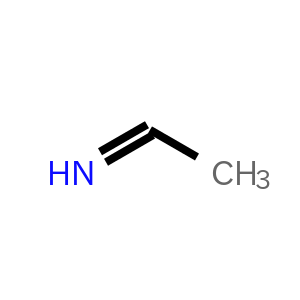} &
  \includegraphics[width=0.145\linewidth]{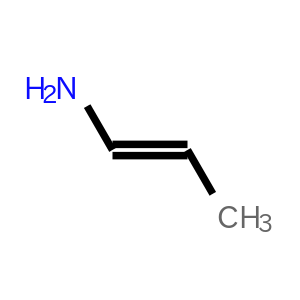} &
  \includegraphics[width=0.115\linewidth]{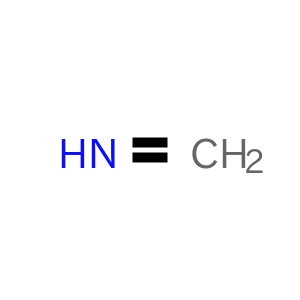} \\
  $4.54 \times 10^{-16}$ & $1.76 \times 10^{-13}$ & $1.70 \times 10^{-12}$ & $2.03 \times 10^{-12}$ \\ \hline
  \includegraphics[width=0.155\linewidth]{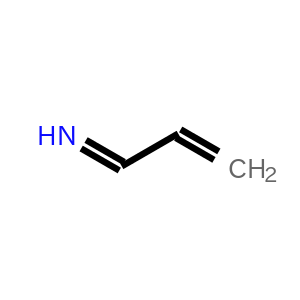} &
  \includegraphics[width=0.145\linewidth]{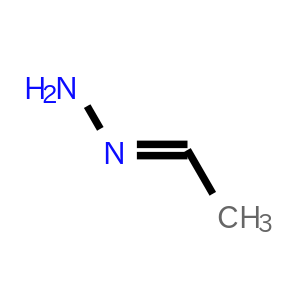} &
  \includegraphics[width=0.145\linewidth]{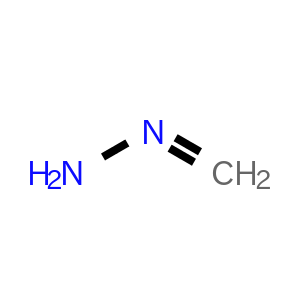} &
  \includegraphics[width=0.115\linewidth]{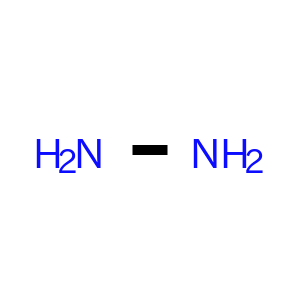} \\
  $6.47 \times 10^{-12}$ & $1.04 \times 10^{-11}$ & $1.04 \times 10^{-11}$ & $1.35 \times 10^{-11}$ \\ \hline
  \includegraphics[width=0.165\linewidth]{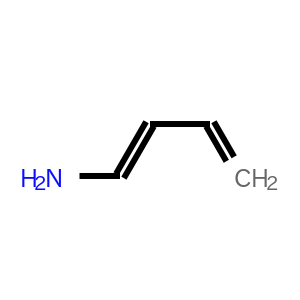} &
  \includegraphics[width=0.18\linewidth]{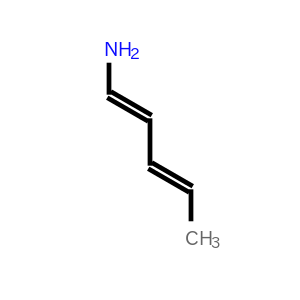} &
  \includegraphics[width=0.145\linewidth]{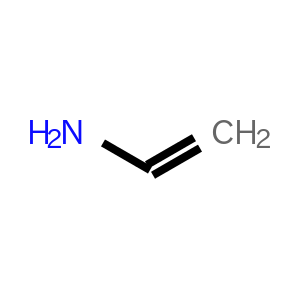} &
  \includegraphics[width=0.145\linewidth]{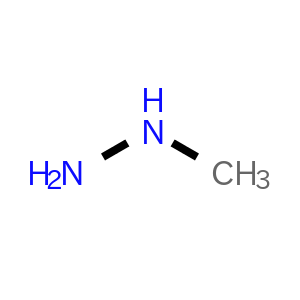} \\
  $3.36 \times 10^{-11}$ & $3.89 \times 10^{-11}$ & $5.85 \times 10^{-11}$ & $2.17 \times 10^{-10}$ \\ \hline
  \includegraphics[width=0.175\linewidth]{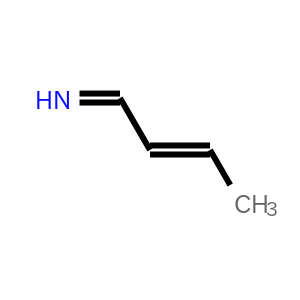} &
  \includegraphics[width=0.17\linewidth]{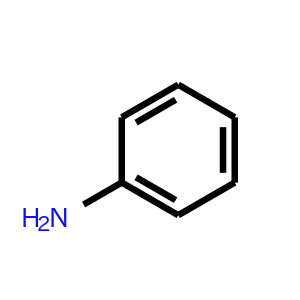} &
  \includegraphics[width=0.185\linewidth]{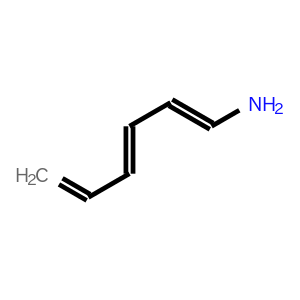} &
  \includegraphics[width=0.175\linewidth]{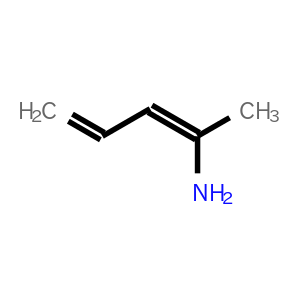} \\
  $3.51 \times 10^{-10}$ & $6.59 \times 10^{-10}$ & $1.55 \times 10^{-9}$ & $1.03 \times 10^{-8}$ \\ \hline
  \includegraphics[width=0.185\linewidth]{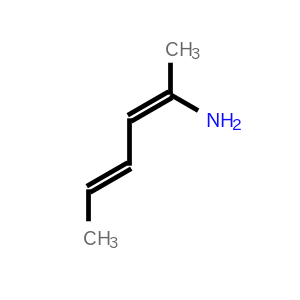} &
  \includegraphics[width=0.165\linewidth]{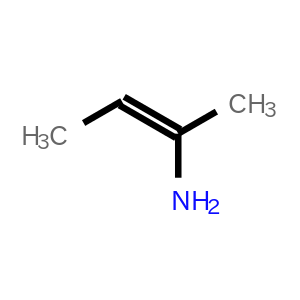} &
  \includegraphics[width=0.185\linewidth]{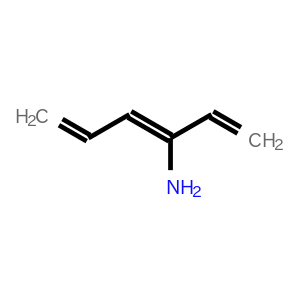} &
  \includegraphics[width=0.175\linewidth]{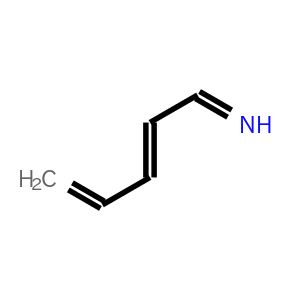} \\
  $1.39 \times 10^{-8}$ & $1.49 \times 10^{-8}$ & $2.01 \times 10^{-8}$ & $3.85 \times 10^{-8}$ \\ \hline
 \end{tabular}
 \caption{The list of 20 subgraphs and Bonferroni-adjusted selective $p$-values.
 The label ``H(hydrogen)" is omitted when the lebel of vertex is only ``H".}
 \label{fig:Graph-pval-result}
\end{figure}

\section{Conclusion}
\label{sec:conclusion}
In this paper 
we extended selective inference framework
to
predictive pattern mining problems
by introducing a novel computational trick
for computing selective sampling distribution
for a class of mining algorithms.
We demonstrate that
selective inference approach is useful
for
finding statistically sound patterns
from databases
because it allows us to address selection bias issue. 

\bibliographystyle{IEEEtran}
\bibliography{paper}

% Generated by IEEEtran.bst, version: 1.14 (2015/08/26)
\begin{thebibliography}{10}
\providecommand{\url}[1]{#1}
\csname url@samestyle\endcsname
\providecommand{\newblock}{\relax}
\providecommand{\bibinfo}[2]{#2}
\providecommand{\BIBentrySTDinterwordspacing}{\spaceskip=0pt\relax}
\providecommand{\BIBentryALTinterwordstretchfactor}{4}
\providecommand{\BIBentryALTinterwordspacing}{\spaceskip=\fontdimen2\font plus
\BIBentryALTinterwordstretchfactor\fontdimen3\font minus
  \fontdimen4\font\relax}
\providecommand{\BIBforeignlanguage}[2]{{%
\expandafter\ifx\csname l@#1\endcsname\relax
\typeout{** WARNING: IEEEtran.bst: No hyphenation pattern has been}%
\typeout{** loaded for the language `#1'. Using the pattern for}%
\typeout{** the default language instead.}%
\else
\language=\csname l@#1\endcsname
\fi
#2}}
\providecommand{\BIBdecl}{\relax}
\BIBdecl

\bibitem{hamalainen2014statistically}
W.~H{\"a}m{\"a}l{\"a}inen and G.~Webb, ``Statistically sound pattern
  discovery,'' in \emph{Tutorial of the 20th ACM SIGKDD international
  conference on Knowledge discovery and data mining}.\hskip 1em plus 0.5em
  minus 0.4em\relax ACM, 2014.

\bibitem{webb2007discovering}
G.~I. Webb, ``Discovering significant patterns,'' \emph{Machine Learning},
  vol.~68, no.~1, pp. 1--33, 2007.

\bibitem{fan2008direct}
W.~Fan, K.~Zhang, H.~Cheng, J.~Gao, X.~Yan, J.~Han, P.~Yu, and O.~Verscheure,
  ``Direct mining of discriminative and essential frequent patterns via
  model-based search tree,'' in \emph{Proceedings of the 14th ACM SIGKDD
  international conference on Knowledge discovery and data mining}.\hskip 1em
  plus 0.5em minus 0.4em\relax ACM, 2008, pp. 230--238.

\bibitem{novak2009supervised}
P.~K. Novak, N.~Lavra{\v{c}}, and G.~I. Webb, ``Supervised descriptive rule
  discovery: A unifying survey of contrast set, emerging pattern and subgroup
  mining,'' \emph{The Journal of Machine Learning Research}, vol.~10, pp.
  377--403, 2009.

\bibitem{zimmermann2014supervised}
A.~Zimmermann and S.~Nijssen, ``Supervised pattern mining and applications to
  classification,'' in \emph{Frequent Pattern Mining}.\hskip 1em plus 0.5em
  minus 0.4em\relax Springer, 2014, pp. 425--442.

\bibitem{heckman1979sample}
J.~J. Heckman, ``Sample selection bias as a specification error,''
  \emph{Econometrica: Journal of the econometric society}, pp. 153--161, 1979.

\bibitem{taylor2015statistical}
J.~Taylor and R.~J. Tibshirani, ``Statistical learning and selective
  inference,'' \emph{Proceedings of the National Academy of Sciences}, vol.
  112, no.~25, pp. 7629--7634, 2015.

\bibitem{berk2013valid}
R.~Berk, L.~Brown, A.~Buja, K.~Zhang, L.~Zhao \emph{et~al.}, ``Valid
  post-selection inference,'' \emph{The Annals of Statistics}, vol.~41, no.~2,
  pp. 802--837, 2013.

\bibitem{lee2013exact}
J.~D. Lee, D.~L. Sun, Y.~Sun, and J.~E. Taylor, ``Exact post-selection
  inference with the lasso,'' \emph{arXiv preprint arXiv:1311.6238}, 2013.

\bibitem{agrawal1993mining}
R.~Agrawal, T.~Imieli{\'n}ski, and A.~Swami, ``Mining association rules between
  sets of items in large databases,'' in \emph{ACM SIGMOD Record}, vol.~22,
  no.~2.\hskip 1em plus 0.5em minus 0.4em\relax ACM, 1993, pp. 207--216.

\bibitem{bay2001detecting}
S.~D. Bay and M.~J. Pazzani, ``Detecting group differences: Mining contrast
  sets,'' \emph{Data Mining and Knowledge Discovery}, vol.~5, no.~3, pp.
  213--246, 2001.

\bibitem{yan2008mining}
X.~Yan, H.~Cheng, J.~Han, and P.~S. Yu, ``Mining significant graph patterns by
  leap search,'' in \emph{Proceedings of the 2008 ACM SIGMOD international
  conference on Management of data}.\hskip 1em plus 0.5em minus 0.4em\relax
  ACM, 2008, pp. 433--444.

\bibitem{hamalainen2010statapriori}
W.~H{\"a}m{\"a}l{\"a}inen, ``Statapriori: an efficient algorithm for searching
  statistically significant association rules,'' \emph{Knowledge and
  information systems}, vol.~23, no.~3, pp. 373--399, 2010.

\bibitem{arora2014mining}
A.~Arora, M.~Sachan, and A.~Bhattacharya, ``Mining statistically significant
  connected subgraphs in vertex labeled graphs,'' in \emph{Proceedings of the
  2014 ACM SIGMOD international conference on Management of data}.\hskip 1em
  plus 0.5em minus 0.4em\relax ACM, 2014, pp. 1003--1014.

\bibitem{shaffer1995multiple}
J.~P. Shaffer, ``Multiple hypothesis testing,'' \emph{Annual review of
  psychology}, vol.~46, no.~1, pp. 561--584, 1995.

\bibitem{jensen2000multiple}
D.~D. Jensen and P.~R. Cohen, ``Multiple comparisons in induction algorithms,''
  \emph{Machine Learning}, vol.~38, no.~3, pp. 309--338, 2000.

\bibitem{terada2013statistical}
A.~Terada, M.~Okada-Hatakeyama, K.~Tsuda, and J.~Sese, ``Statistical
  significance of combinatorial regulations,'' \emph{Proceedings of the
  National Academy of Sciences}, vol. 110, no.~32, pp. 12\,996--13\,001, 2013.

\bibitem{tarone1990modified}
R.~Tarone, ``A modified bonferroni method for discrete data,''
  \emph{Biometrics}, pp. 515--522, 1990.

\bibitem{terada2013fast}
A.~Terada, K.~Tsuda, and J.~Sese, ``Fast westfall-young permutation procedure
  for combinatorial regulation discovery,'' in \emph{Bioinformatics and
  Biomedicine (BIBM), 2013 IEEE International Conference on}.\hskip 1em plus
  0.5em minus 0.4em\relax IEEE, 2013, pp. 153--158.

\bibitem{minato2014fast}
S.-i. Minato, T.~Uno, K.~Tsuda, A.~Terada, and J.~Sese, ``A fast method of
  statistical assessment for combinatorial hypotheses based on frequent itemset
  enumeration,'' in \emph{Machine Learning and Knowledge Discovery in
  Databases}.\hskip 1em plus 0.5em minus 0.4em\relax Springer, 2014, pp.
  422--436.

\bibitem{sugiyama2015significant}
M.~Sugiyama, F.~L. L{\'o}pez, N.~Kasenburg, and K.~M. Borgwardt, ``Significant
  subgraph mining with multiple testing correction,'' in \emph{Proceedings of
  the 2015 SIAM International Conference on Data Mining}, 2015, pp. 37--45.

\bibitem{lopez2015fast}
F.~L. L{\'o}pez, M.~Sugiyama, L.~Papaxanthos, and K.~M. Borgwardt, ``Fast and
  memory-efficient significant pattern mining via permutation testing,''
  \emph{Proceedings of the 21st ACM SIGKDD Conference on Knowledge Discovery
  and Data Mining}, 2015.

\bibitem{benjamini2010simultaneous}
Y.~Benjamini, ``Simultaneous and selective inference: current successes and
  future challenges,'' \emph{Biometrical Journal}, vol.~52, no.~6, pp.
  708--721, 2010.

\bibitem{deshpande2005frequent}
M.~Deshpande, M.~Kuramochi, N.~Wale, and G.~Karypis, ``Frequent
  substructure-based approaches for classifying chemical compounds,''
  \emph{IEEE Transactions on Knowledge and Data Engineering}, vol.~17, no.~8,
  pp. 1036--1050, 2005.

\bibitem{saigo2008partial}
H.~Saigo, N.~Kr{\"a}mer, and K.~Tsuda, ``Partial least squares regression for
  graph mining,'' in \emph{Proceedings of the 14th ACM SIGKDD international
  conference on Knowledge discovery and data mining}.\hskip 1em plus 0.5em
  minus 0.4em\relax ACM, 2008, pp. 578--586.

\bibitem{saigo2009gboost}
H.~Saigo, S.~Nowozin, T.~Kadowaki, T.~Kudo, and K.~Tsuda, ``gboost: a
  mathematical programming approach to graph classification and regression,''
  \emph{Machine Learning}, vol.~75, no.~1, pp. 69--89, 2009.

\bibitem{ketkar2009gregress}
N.~S. Ketkar, L.~B. Holder, and D.~J. Cook, ``gregress: Extracting features
  from graph transactions for regression,'' in \emph{Twenty-First International
  Joint Conference on Artificial Intelligence}, 2009.

\bibitem{fithian2014optimal}
W.~Fithian, D.~Sun, and J.~Taylor, ``Optimal inference after model selection,''
  \emph{arXiv preprint arXiv:1410.2597}, 2014.

\bibitem{kudo2004application}
T.~Kudo, E.~Maeda, and Y.~Matsumoto, ``An application of boosting to graph
  classification,'' in \emph{Advances in neural information processing
  systems}, 2004, pp. 729--736.

\bibitem{nakagawaKDD2016submitted}
K.~Nakagawa, S.~Suzumura, M.~Karasuyama, K.~Tsuda, and I.~Takeuchi, ``Safe
  pattern pruning: an efficient approach for predictive pattern mining,'' 2016,
  unpublished manuscript.

\bibitem{friedman2001elements}
J.~Friedman, T.~Hastie, and R.~Tibshirani, \emph{The elements of statistical
  learning}.\hskip 1em plus 0.5em minus 0.4em\relax Springer series in
  statistics Springer, Berlin, 2001, vol.~1.

\bibitem{lee2014exact}
J.~D. Lee and J.~E. Taylor, ``Exact post model selection inference for marginal
  screening,'' in \emph{Advances in Neural Information Processing Systems},
  2014, pp. 136--144.

\bibitem{takigawa2013graph}
I.~Takigawa and H.~Mamitsuka, ``Graph mining: procedure, application to drug
  discovery and recent advances,'' \emph{Drug discovery today}, vol.~18, no.~1,
  pp. 50--57, 2013.

\bibitem{jiawei2006datamining}
J.~Han, M.~Kamber, and J.~Pei, \emph{Data Mining: Concepts and Techniques},
  2nd~ed.\hskip 1em plus 0.5em minus 0.4em\relax Morgan Kaufmann, 2006.

\bibitem{lancichinetti2011finding}
A.~Lancichinetti, F.~Radicchi, J.~J. Ramasco, S.~Fortunato \emph{et~al.},
  ``Finding statistically significant communities in networks,'' \emph{PloS
  one}, vol.~6, no.~4, p. e18961, 2011.

\bibitem{weill2009development}
N.~Weill and D.~Rognan, ``Development and validation of a novel protein- ligand
  fingerprint to mine chemogenomic space: application to g protein-coupled
  receptors and their ligands,'' \emph{Journal of chemical information and
  modeling}, vol.~49, no.~4, pp. 1049--1062, 2009.

\bibitem{borgelt2002mining}
C.~Borgelt and M.~R. Berthold, ``Mining molecular fragments: Finding relevant
  substructures of molecules,'' in \emph{Data Mining, 2002. ICDM 2003.
  Proceedings. 2002 IEEE International Conference on}.\hskip 1em plus 0.5em
  minus 0.4em\relax IEEE, 2002, pp. 51--58.

\bibitem{yan2002gspan}
X.~Yan and J.~Han, ``gspan: Graph-based substructure pattern mining,'' in
  \emph{Data Mining, 2002. ICDM 2003. Proceedings. 2002 IEEE International
  Conference on}.\hskip 1em plus 0.5em minus 0.4em\relax IEEE, 2002, pp.
  721--724.

\bibitem{rhee2003human}
S.-Y. Rhee, M.~J. Gonzales, R.~Kantor, B.~J. Betts, J.~Ravela, and R.~W.
  Shafer, ``Human immunodeficiency virus reverse transcriptase and protease
  sequence database,'' \emph{Nucleic acids research}, vol.~31, no.~1, pp.
  298--303, 2003.

\bibitem{saigo2007mining}
H.~Saigo, T.~Uno, and K.~Tsuda, ``Mining complex genotypic features for
  predicting hiv-1 drug resistance,'' \emph{Bioinformatics}, vol.~23, no.~18,
  pp. 2455--2462, 2007.

\end{thebibliography}

\appendix
\section{Proofs}

\paragraph{Proof of Proposition~\ref{prop:search-over-k-trees}}
\begin{proof}
%
%First,
%note that
%$\theta = 0$ 
%is feasible solutions of the two optimization problems in \eq{eq:truncation-points} 
%because
%$\bm y \in {\rm Pol}(\cK, \cA, \cT)$,
%which means that 
%$\theta_{\rm min} \le 0$
%and 
%$\theta_{\rm max} \ge 0$.
%
%Next, 
From \eq{eq:linear-event}, 
the constraint
$\bm{y} + \theta \bm{\eta} \in {\rm Pol}(\cK, \cA, \cT)$
%in
%\eq{eq:theta_min}
%and
%\eq{eq:theta_max}
is written as
%
%\begin{align*}
% \bm{\tau}_j^\top (\bm{y} + \theta \bm{\eta})
%  \ge
% \bm{\tau}_{j^\prime}^\top (\bm{y} + \theta \bm{\eta})
%  \quad
% \forall (j, j^\prime) \in \cK \times \{[J] \setminus \cK\},
%\end{align*}
%
%i.e.,
%
\begin{subequations} %18:33'ÌŽ®ŒQ
 \label{eq:app:prop3-a}
 \begin{align}
  \frac{(\bm \tau_{j} - \bm \tau_{j^\prime})^\top \bm y}{(\bm \tau_{j^\prime} - \bm \tau_{j})^\top \bm \eta}
  \le \theta
  ~~\text{if}~~ (\bm \tau_{j^\prime} - \bm \tau_{j})^\top \bm \eta < 0, \\
 \frac{(\bm \tau_{j} - \bm \tau_{j^\prime})^\top \bm y}{(\bm \tau_{j^\prime} - \bm \tau_{j})^\top \bm \eta}
  \ge \theta
  ~~\text{if}~~ (\bm \tau_{j^\prime} - \bm \tau_{j})^\top \bm \eta > 0\phantom{,}
 \end{align}
\end{subequations}
for all possible pairs of $(j, j^\prime) \in \cK \times \{[J] \setminus \cK\}$.
(i) First, for $(j, j^\prime) \in \cK \times \{[J] \setminus \cK\}$
such that  
$(\bm \tau_{j^\prime} - \bm \tau_{j})^\top \bm \eta < 0$,
the minimum possible feasible $\theta$ would be
\begin{align*}
  \max_{(j, j^\prime) \in \cK \times \{[J] \setminus \cK\}}
  \frac{(\bm \tau_{j} - \bm \tau_{j^\prime})^\top \bm y}{(\bm \tau_{j^\prime} - \bm \tau_{j})^\top \bm \eta},
\end{align*} 
and 
the maximum possible feasible $\theta$ would be $\infty$. 
(ii) 
Similarly,  
for $(j, j^\prime) \in \cK \times \{[J] \setminus \cK\}$
such that  
$(\bm \tau_{j^\prime} - \bm \tau_{j})^\top \bm \eta > 0$,
the minimum possible feasible $\theta$ would be $-\infty$ 
and 
the maximum possible feasible $\theta$ would be
\begin{align*}
  \min_{(j, j^\prime) \in \cK \times \{[J] \setminus \cK\}}
  \frac{(\bm \tau_{j} - \bm \tau_{j^\prime})^\top \bm y}{(\bm \tau_{j^\prime} - \bm \tau_{j})^\top \bm \eta}.
\end{align*} 
Since the requirements in (i) and (ii) must be satisfied for all possible  
$(j, j^\prime) \in \cK \times \{[J] \setminus \cK\}$,
by combining (i) and (ii),
$\theta_{\rm min}$
and
$\theta_{\rm max}$
are given by
\eq{eq:theta_min}
and
\eq{eq:theta_max},
respectively. 
\end{proof}

\paragraph{Proof of Theorem~\ref{theo:pruning_cond}}
\begin{proof}
Noting that $0 \le \tau_{i, \ell^\prime} \le \tau_{i, j^\prime} \le 1$,
for any descendant node $\ell^\prime \in Des(j^\prime)$
\begin{subequations} %19:39'ÌŽ®ŒQ
 \begin{align}
  \nonumber
  (\bm \tau_{j} - \bm \tau_{\ell^\prime})^\top \bm y
  &= \bm \tau_{j}^\top \bm y \!-\! \sum_{i : y_i > 0}\! \tau_{i, \ell^\prime} y_i \!-\! \sum_{i : y_i < 0}\! \tau_{i, \ell^\prime} y_i \\
  \label{eq:proof-theo4-a1}
  & \hspace*{-5mm}
  \ge \bm \tau_{j}^\top \bm y \!-\! \sum_{i : y_i > 0}\! \tau_{i, \ell^\prime} y_i
   \ge \bm \tau_{j}^\top \bm y \!-\! \sum_{i : y_i > 0}\! \tau_{i, j^\prime} y_i,\\
  \nonumber
  (\bm \tau_{\ell^\prime} - \bm \tau_{j})^\top \bm \eta
  &= \sum_{i : \eta_i > 0}\! \tau_{i, \ell^\prime} \eta_i \!+\! \sum_{i : \eta_i < 0}\! \tau_{i, \ell^\prime} \eta_i
     \!-\! \bm \tau_{j}^\top \bm \eta \\
  \label{eq:proof-theo4-a2}
  & \hspace*{-5mm}
  \ge \sum_{i : \eta_i < 0}\! \tau_{i, \ell^\prime} \eta_i \!-\! \bm \tau_{j}^\top \bm \eta
   \ge \sum_{i : \eta_i < 0}\! \tau_{i, j^\prime} \eta_i \!-\! \bm \tau_{j}^\top \bm \eta.
\end{align}
\end{subequations}
We prove the first half of the theorem.
(i)
%First, we prove \eq{eq:theo-main-a1}.
%
From
\eq{eq:proof-theo4-a2}, 
\begin{align*}
%\sum_{i : \eta_i < 0} \tau_{i, j^\prime} \eta_i - \bm \tau_{j}^\top \bm \eta \ge 0
 \text{\eq{eq:theo-main-a1}}
 ~\Rightarrow~
 (\bm \tau_{\ell^\prime} - \bm \tau_{j})^\top \bm \eta \ge 0. 
\end{align*}
Also from Proposition~\ref{prop:search-over-k-trees}, 
any pairs $(j, \ell^\prime)$ such that $(\bm \tau_{\ell^\prime} - \bm \tau_{j})^\top \bm \eta \ge 0$
are irrelevant to 
the solution $\theta_{\rm min}$. 
It means that,
when
\eq{eq:theo-main-a1}
holds,  
$(j, \ell^\prime)$
for
$\ell^\prime \in Des(j^\prime)$
do not affect the solution of \eq{eq:theta_min}.  
(ii)
% Next, we prove \eq{eq:theo-main-a2}.
%
From Proposition~\ref{prop:search-over-k-trees},
we only need to consider the case where
$(\bm \tau_{j^\prime} - \bm \tau_j)^\top \bm \eta < 0$
and  
$(\bm \tau_{\ell^\prime} - \bm \tau_j)^\top \bm \eta < 0$. 
%a
When
$\bm \tau_{j}^\top \bm y \!-\! \sum_{i : y_i > 0}\! \tau_{i, j^\prime} y_i \ge 0$,
from \eq{eq:proof-theo4-a1}, 
 \begin{align*}
 \frac{
  (\bm \tau_{j} - \bm \tau_{\ell^\prime})^\top \bm y
 }{
  \min[(\bm \tau_{\ell^\prime} - \bm \tau_{j})^\top \bm \eta, 0]
 }
 &\le
 \frac{
  \bm \tau_{j}^\top \bm y \!-\! \sum_{i : y_i > 0}\! \tau_{i, \ell^\prime} y_i
 }{
  \min[\sum_{i : \eta_i < 0}\! \tau_{i, \ell^\prime} \eta_i \!-\! \bm \tau_{j}^\top \bm \eta, 0]
 } \\
 &\le
 \frac{
  \bm \tau_{j}^\top \bm y \!-\! \sum_{i : y_i > 0}\! \tau_{i, j^\prime} y_i
 }{
  \min[\sum_{i : \eta_i < 0}\! \tau_{i, j^\prime} \eta_i \!-\! \bm \tau_{j}^\top \bm \eta, 0]
 }
 \le
\hat{\theta}_{\rm min}^{\cV}.
 \end{align*}
%Then, we have
%\begin{align*}
% \frac{
%  \bm \tau_{j}^\top \bm y \!-\! \sum_{i : y_i > 0}\! \tau_{i, j^\prime} y_i
% }{
%  \min[\sum_{i : \eta_i < 0}\! \tau_{i, j^\prime} \eta_i \!-\! \bm \tau_{j}^\top \bm \eta, 0]
% }
% &\le \hat{\theta}_{\rm max}^{\cV}
%\\
% %
% ~\Rightarrow~
% %
% \frac{
%  (\bm \tau_{j} - \bm \tau_{\ell^\prime})^\top \bm y
% }{
%  \min[(\bm \tau_{\ell^\prime} - \bm \tau_{j})^\top \bm \eta, 0]
% }
% &\le \hat{\theta}_{\rm max}^{\cV}.
%\end{align*}
%
It means that, 
when
\eq{eq:theo-main-a2}
holds,  
$(j, \ell^\prime)$
for
$\ell^\prime \in Des(j^\prime)$
do not affect the solution of \eq{eq:theta_min}.  
By combining (i) and (ii),
the first half of the theorem is proved.
 % The solution $\theta_{\rm min}$ is irrelivant to any pairs of $(j, \ell^\prime)$ such that $(\bm \tau_{\ell^\prime} - \bm \tau_{j})^\top \bm \eta \ge 0$,
%and thus the proof of the second condition of Theorem~\ref{theo:pruning_cond} can be concluded.
%%
The latter half of the theorem can be shown similarly.
\end{proof}

\end{document}